\documentclass{article}

\usepackage[preprint]{corl_2026} 

\usepackage{silence}

\usepackage[T1]{fontenc}

\usepackage{graphicx}
\usepackage{booktabs}
\usepackage{float}
\usepackage{adjustbox}
\usepackage{multirow}
\usepackage{makecell}
\usepackage{tabularx}
\usepackage{enumitem}
\usepackage{wrapfig}
\usepackage{xspace}
\usepackage{subcaption}

\usepackage{amsmath, amssymb, amsthm}
\newtheorem{theorem}{Theorem}

\newtheorem{lemma}{Lemma}

\newtheorem{assumption}{Assumption}
\newtheorem{definition}{Definition}

\graphicspath{{./figure/}}

\usepackage{xcolor}
\definecolor{citecolor}{HTML}{0071bc}
\definecolor{googleblue}{HTML}{4285F4}
\definecolor{googlered}{HTML}{EA4335}
\usepackage[table]{xcolor}
\definecolor{lightgreen}{RGB}{240, 251, 237}
\definecolor{lightblue}{RGB}{237, 245, 251}
\definecolor{lightred}{RGB}{251, 240, 237}
\definecolor{lightgrey}{RGB}{240,240,240}

\hypersetup{
    colorlinks=true,
    breaklinks=true,
    citecolor=citecolor,
    linkcolor=googlered,
    urlcolor=googleblue,
}

\usepackage{cleveref}

\title{Where Success Breaks: Failure-Boundary Learning for Robust Vision-Language-Action Models}

\vspace{-20pt}

\author{
  Yanzhe Chen$^*$, Zhijun Cao$^*$, Mike Zheng Shou$^\dagger$ \\[0.1em]
  Showlab, National University of Singapore \\[0.1em]
  $^*$ Equal contribution. \quad $^\dagger$ Corresponding author.
}

\begin{document}
\maketitle

\begin{figure}[htb]
  \vspace{-20pt}
  \centering
    \includegraphics[width=\linewidth, trim=0 0 0 0, clip]{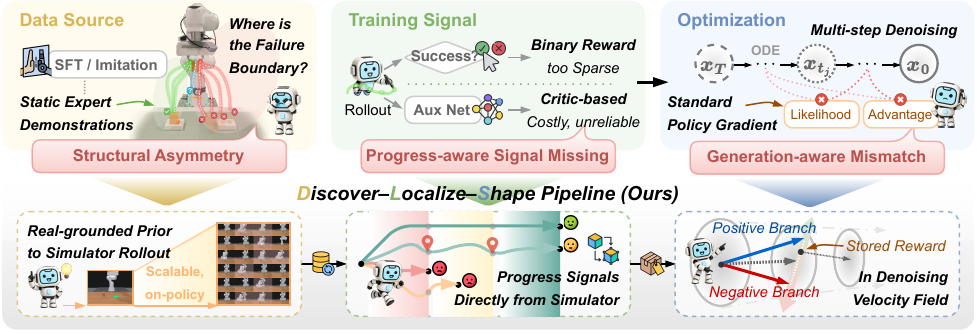}
    \caption{\textbf{Failure-Boundary Learning for Flow-based VLAs.}
    Three bottlenecks cripple current post-SFT adaptation: \textit{asymmetric supervision}, \textit{missing progress signal}, and \textit{generation-aware mismatch} (top). Our DLS pipeline resolves them in turn (bottom): scalable on-policy rollouts of a real-grounded prior, simulator-derived progress labels, and push--pull shaping in the velocity field.}
  \label{fig:intro}
  \vspace{-8pt}
\end{figure}

\begin{abstract}
Vision-language-action (VLA) models adapted through supervised fine-tuning (SFT) inherit a structural asymmetry: expert demonstrations teach the policy where success behavior lies, but provide no signal about where it ceases to be reliable.
We argue that robust VLA adaptation should therefore be viewed not as further demonstration fitting, but as \textbf{Failure-Boundary Learning}---the problem of \textit{\textbf{D}iscovering}, \textit{\textbf{L}ocalizing}, and \textit{\textbf{S}haping} the boundary between recoverable deviations and task failure. To instantiate this view, we propose \textbf{\textit{DLS}}: built on a \textbf{\textit{real-grounded behavioral prior}} from few real demonstrations and simulated co-training, DLS \textit{discovers} failure boundaries at scale through on-policy digital twin rollouts. Rather than reducing each rollout to a binary label, \textbf{\textit{semantic progress localization}} uses privileged simulator states to assign progress-aware signals that capture \textit{where} the failure boundary is crossed, not merely \textit{whether}. These signals drive \textbf{\textit{directional boundary shaping}} in the flow dynamics---reinforcing success-producing denoising directions and suppressing failure-producing ones, without action likelihoods or auxiliary critics. Across real-robot manipulation tasks, DLS improves robustness over SFT and online RL baselines, especially under randomized initial states and unseen visual conditions.

\end{abstract}


\keywords{Vision-Language-Action Models, Failure-Boundary Learning, Flow-Based Policy Optimization, Robust Robot Manipulation}

\section{Introduction}
\label{sec:introduction}


Vision-language-action (VLA) models~\cite{brohan2022rt1,zitkovich2023rt2,kim2024openvla,black2024pi_0,bjorck2025gr00t,intelligence2025pi_05,team2025gemini,mees2024octo} have become a strong foundation for robot manipulation, yet adapting them to downstream tasks still largely relies on supervised fine-tuning (SFT) over expert demonstrations. While effective, it inherits a structural asymmetry of imitation learning~\cite{ross2011dagger,kachaev2025dontblind,xiang2025vlaposttraining,zhang2025robustvla}: it pulls the policy toward successful actions, while providing little signal about actions that lead to breakdown. During closed-loop execution, small errors can move the robot outside the demonstration distribution, where imitation offers limited guidance on how to recover or what to avoid.~\cite{xia2025phoenix,lin2025failsafe,welte2026flowcorrect,kalinowska2021ergodic}
The core limitation is therefore not insufficient exposure to successful behavior, but the absence of any signal about where that behavior ceases to be reliable.

We study this missing signal in the post-SFT adaptation of flow-based VLAs~\cite{black2024pi_0,intelligence2025pi_05,lipman2022flow}, and frame it as \textbf{Failure-Boundary Learning}: learning where the policy's own closed-loop behavior transitions from recoverable deviation to task failure. This view is more specific than simply learning from failures. A useful failure signal should reveal not only \emph{whether} an episode failed, but also \emph{where} it failed, how far the policy progressed before breakdown, and which partial behaviors remain worth preserving. Robust adaptation therefore requires three operations: \textbf{\textit{(\romannumeral1) discover}} self-generated failures under closed-loop execution, \textbf{\textit{(\romannumeral2) localize}} where the rollout breaks down in task progress, and \textbf{\textit{(\romannumeral3) shape}} the policy away from failure-producing directions.


This perspective imposes three requirements on the training signal. It should be \textbf{on-policy and scalable}---manually curated failures rarely cover errors the current policy produces, and boundary discovery requires many trials beyond what physical robot interaction can provide~\cite{luo2024serl,intelligence2025pi06*}; \textbf{progress-localized}, since failures at different task stages imply different corrective signals; and \textbf{generation-aware}, since flow-based VLAs generate actions via multi-step denoising, where exact action likelihoods are not directly available for standard policy-gradient updates~\cite{liu2025flowgrpo,chen2025pirl}. Together, these demands call for a procedure that exposes failures at scale, locates where each rollout breaks down, and turns this signal into policy improvement without likelihoods or value functions~\cite{zheng2025diffusionnft,wang2026pistepnft}.

We instantiate failure-boundary learning with a compact \textbf{Discover–Localize–Shape (DLS)} pipeline, grounded in a \textbf{\textit{real-grounded behavioral prior}} obtained through Sim-Real co-training with few real demonstrations and simulation for behavioral diversity. Building on this prior, we use the digital twin as a \textbf{boundary discovery} instrument: rolling out the current policy at scale exposes failures induced by its own closed-loop behavior. To trace these failures, \textbf{\textit{semantic progress localization}} uses privileged simulator states to assign progress-aware signals that distinguish clean successes, inefficient successes, and partial failures. These signals drive \textbf{\textit{directional boundary shaping}} in the flow dynamics, reinforcing success-producing denoising directions while suppressing failure-producing ones. Across real-robot manipulation tasks, our method improves robustness over SFT and online RL baselines under randomized initial states and unseen visual conditions.

Our contributions are summarized as follows:
\vspace{-0.15cm}
\begin{itemize}[leftmargin=*]
    \item \textbf{Boundary-centric adaptation.}
    We cast post-SFT adaptation of flow-based VLAs as learning where closed-loop behavior breaks, rather than only fitting where expert behavior succeeds.
    \item \textbf{Failures with progress.}
    We convert self-generated rollouts into progress-localized supervision, separating clean successes, inefficient successes, and partial failures.
    \item \textbf{Critic-free flow shaping.} We shape flow-based VLAs with real-grounded, simulator-scaled push--pull updates, without action likelihoods or learned critics.
\end{itemize}


\section{Related Work}
\label{sec:relatedwork}

\subsection{Post-SFT Optimization for Flow-Based VLAs}
\label{sub:vla_models}

Adapting pretrained VLA policies~\cite{brohan2022rt1,zitkovich2023rt2,kim2024openvla,black2024pi_0,bjorck2025gr00t,intelligence2025pi_05,mees2024octo,zhang2025dreamvla,li2024cogact,cen2025worldvla} beyond SFT~\cite{ross2011dagger,chen2026aca,kim2025openvlaoft} requires moving past the covariate-shift and outcome-blind nature of imitation~\cite{pan2025muchado,romer2026failure,xu2025can}, and existing post-SFT methods for flow-based VLAs differ along two axes: how to optimize without tractable action log-likelihoods, and what supervisory signal to optimize against. \textbf{\textit{(\romannumeral1) Likelihood-Recovery RL}} discretizes the denoising SDE to enable PPO- or GRPO-style updates, at the cost of solver coupling and memory overhead~\cite{liu2025flowgrpo,chen2025pirl,zhang2025reinflow,li2025simplevlarl,liu2026rl4vla}. \textbf{\textit{(\romannumeral2) Critic-Driven Shaping}} learns value or reward critics that bypass likelihoods but suffer from critic overfitting and reward hacking~\cite{li2025grrl,prabhudesai2023alignprop,shu2025rftf,skalse2022defining}. \textbf{\textit{(\romannumeral3) Preference-Based Ranking}} avoids both via contrastive comparison, with step-wise extensions reaching into the denoising trajectory~\cite{zheng2025diffusionnft,wang2026pistepnft,rafailov2023dpo,zhang2025grape}, yet remains driven by coarse binary terminal outcomes. \textbf{\textit{(\romannumeral4) Stage-Aware Reward Design}} decomposes manipulation into ordered sub-stages via  planners~\cite{dalal2024plan,chen2025robohorizon,zhang2025reinbot} or rule-based event predicates~\cite{chen2025sarm,xu2025stare,escoriza2025multi}, but these signals are typically materialized as per-step potential-based shaping~\cite{kim2025stage} consumed by PPO with value networks. Across these families, dense supervision is locked to value-based optimization, while preference ranking retains simplicity at the price of binary blindness; we close this gap by injecting trajectory-level progress-aware labels into step-wise contrastive ranking on the velocity field.

\subsection{Sim-to-Real Transfer for Manipulation Policies}
\label{sub:sim_to_real}

Bridging the sim-to-real gap is essential for scaling manipulation policies beyond the small-N real-data regime, and existing approaches differ in how simulation is leveraged. \textbf{\textit{(\romannumeral1) Distribution-Matching Transfer.}} Domain randomization~\cite{tobin2017domainrandom,zhao2020simtoreal,jiang2025transic} and high-fidelity digital twins built from 3D reconstruction or generative modeling~\cite{jiang2025gsworld,wu2025rlgsbridge,xu2026twinrl,guo2026vlaw} reduce the visual and dynamics gap so that policies trained purely in simulation generalize to real hardware. \textbf{\textit{(\romannumeral2) Supervised Co-Training.}} Treats simulation as a scalable demonstration source and jointly trains on real and simulated data, consistently outperforming real-only SFT even under modest simulator fidelity~\cite{tao2024maniskill3,maddukuri2025simandrealcotraining,nasiriany2024robocasa}; tools like MimicGen~\cite{mandlekar2023mimicgen} enable few-seed synthesis of large simulated demonstration sets. \textbf{\textit{(\romannumeral3) Online Sim-Real Co-Training.}} A nascent line incorporates RL into the co-training loop, which initializes via joint SFT and runs online RL in simulation while regularizing against a real-world anchor to prevent catastrophic forgetting~\cite{shi2026rlco}. Across these families, simulation is treated either as a domain to align with or as a passive demonstration source; its capacity to expose \textit{privileged supervisory state}---ground-truth tcp, gripper and goal predicates---remains underexploited as a structured training signal. We build on the online co-training paradigm and further exploit the digital twin as a training-side label generator, while the deployed policy depends only on RGB and proprioception.

\section{Approach: Discover–Localize–Shape Pipeline}
\label{sec:approach}

To instantiate our view of VLA adaptation as \textbf{Failure-Boundary Learning}, our DLS pipeline proceeds in two stages. In \textit{Stage 1}, we establish a \textbf{\textit{real-grounded behavioral prior}} through supervised co-training to teach the policy nominal behavior (Sec.~\ref{sub:prior}). In \textit{Stage 2} (Fig.~\ref{fig:nft_framework}), the few-shot prior uncovers its own failure modes via online exploration, driven by \textbf{\textit{semantic progress localization}} (\textbf{SPL}) that identifies \textit{where} failures occur, and \textbf{\textit{directional boundary shaping}} (\textbf{DBS}) that systematically pushes the policy's flow dynamics away from errors and toward robust success (Sec.~\ref{sub:onlinerl}).

\subsection{Stage 1: Real-Grounded Behavioral Prior}
\label{sub:prior}

\textbf{Problem Formulation.}
We formulate the manipulation task (in both real-world and simulation) as a Partially Observable Markov Decision Process $\mathcal{M} = \langle \mathcal{S}, \mathcal{A}, \mathcal{P}, \mathcal{O}, r, \ell, \gamma \rangle$, where observations $o_i$ contain RGB images and proprioceptive states. At each environment step $i \in \{0,\dots,K{-}1\}$, the policy $\pi_\theta$ receives $o_i$ and language instruction $\ell$, and predicts an action chunk $A_{i:i+H-1} \sim \pi_\theta(\cdot \mid o_i, \ell)$. Flow-based VLAs generate $A_i$ via iterative denoising over continuous time $t \in [0,1]$, discretized into $T$ solver steps ($1{=}t_0{>}\cdots{>}t_T{=}0$), progressively refining noisy actions $\{a_{t_j}\}_{j=0}^{T}$ into the final executable chunk. The policy learns a time-dependent velocity field $v_\theta(a_t, t, o, \ell)$ that transports Gaussian noise $a_1 \sim \mathcal{N}(0, I)$ toward a target action $a_0$ along the linear interpolant $a_t = t a_1 + (1{-}t) a_0$. Given an expert dataset $\mathcal{D}$, supervised fine-tuning minimizes:
\begin{equation}
    \mathcal{L}_{\mathrm{SFT}}(\theta; \mathcal{D}) = \mathbb{E}_{(\cdot) \sim \mathcal{D}} \; \mathbb{E}_{i, t, a_1} \left[ \left\| v_\theta(a_t, t, o_i, \ell) - (a_1 - a_0) \right\|^2 \right].
    \label{eq:sft}
\end{equation}
\textbf{Simulated Data Generation.}
We construct a digital twin in ManiSkill~\cite{tao2024maniskill3}, with simulated cameras configured using real-world hand-eye calibration matrices. To scale up expert demonstrations, we apply MimicGen~\cite{mandlekar2023mimicgen} to five real-world seed trajectories. These seeds are decomposed into task-relevant segments and geometrically transformed with randomized initial object positions to synthesize novel, large-scale simulated demonstrations.

\textbf{Co-training objective.}
To anchor the policy in the deployment domain while enriching behavioral coverage for downstream exploration, we optimize a Sim-Real mixture objective~\cite{shi2026rlco} over a limited set of real demonstrations $\mathcal{D}_{\mathrm{real}}$ and simulated dataset $\mathcal{D}_{\mathrm{sim}}$, with $\alpha \in [0, 1]$ controlling the supervision ratio between the two domains.
\begin{equation}
    \mathcal{L}_{\mathrm{Sim\text{-}Real}}(\theta) = \alpha\, \mathcal{L}_{\mathrm{SFT}}(\theta; \mathcal{D}_{\mathrm{Sim}}) + (1-\alpha)\, \mathcal{L}_{\mathrm{SFT}}(\theta; \mathcal{D}_{\mathrm{Real}}),
    \label{eq:cosft}
\end{equation}

\subsection{Stage 2: Failure-Boundary Learning via Online Exploration}
\label{sub:onlinerl}

\begin{figure}[!t]
  \centering
    \includegraphics[width=\linewidth, trim=0 0 0 0, clip]{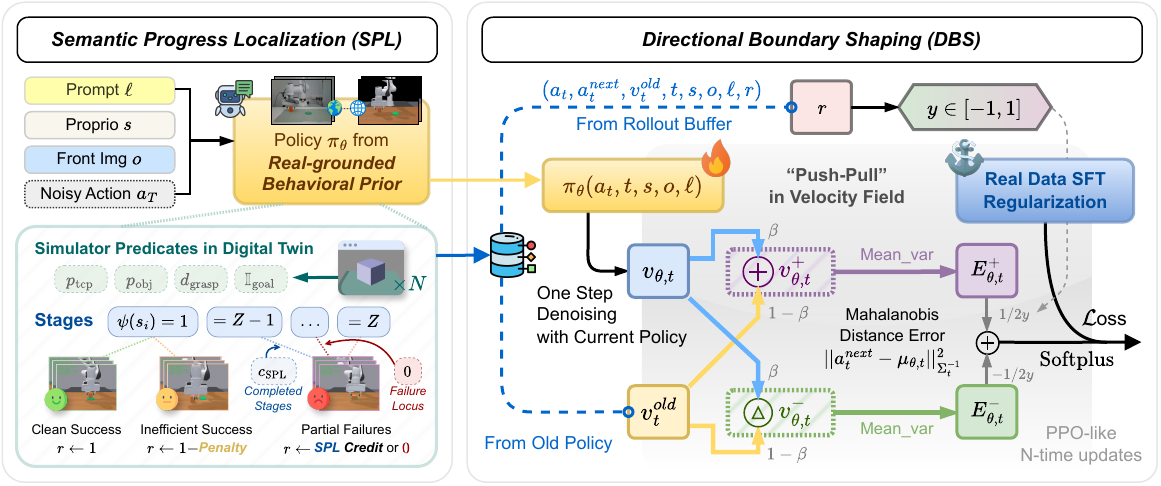}
    \caption{\textbf{Overview of DLS Pipeline.} The rollout policy from \textit{real-grounded behavioral prior} interacts with the digital twin; \textit{semantic progress localization} assigns progress-aware labels to trajectories. Two mirrored branches are constructed around the stored rollout velocity, whose Mahalanobis errors drive \textit{directional boundary shaping}, pulling the flow toward success-producing directions and pushing it away from failure-producing ones, anchored by real-data SFT regularization}
  \label{fig:nft_framework}
  \vspace{-10pt}
\end{figure}

\textbf{On-policy Boundary Discovery.}
We deploy the prior policy $\pi_\theta^{\mathrm{old}}$ across $N$ parallel simulation environments for at most $K$ steps, using the Flow-SDE sampler~\cite{chen2025pirl} to inject stochastic exploration. Each denoising step induces a Gaussian transition with mean:
\begin{equation}
    \mu_t = a_t + \left[ v_\theta(a_t,t,o,\ell) + \frac{\sigma_t^2}{2t} \big( a_t + (1{-}t)v_\theta(a_t,t,o,\ell) \big) \right] \cdot \delta,
    \label{eq:affine_mean}
\end{equation}
and covariance $\Sigma_t = \sigma_t^2\delta\cdot I$. A denoising step index $t_i \sim \mathcal{U}\{0,\ldots,T{-}1\}$ is uniformly sampled per environment step $i$, and the transition tuple is stored in exploration buffer $\mathcal{B}$:
\begin{equation}
    \mathcal{B} \leftarrow \left\{ \left( a_{t_i},\; 
    a_{t_i}^{\mathrm{next}},\; v_{t_i}^{\mathrm{old}},\; 
    t_i,\; s_i,\; o_i,\; \ell,\; r_i \right) \right\}_{i=0}^{K}.
    \label{eq:buffer}
\end{equation}
where $r_i$ is the phase-mapped reward assigned by semantic progress localization (Sec.~\ref{sub:spl}).

\subsubsection{Semantic Progress Localization (SPL)}
\label{sub:spl}

Standard binary outcome labels collapse trajectory-level causal structure into a single bit, conflating qualitatively distinct failure modes---grasp failure, mid-transport drop, placement misalignment---that demand different corrective signals. Semantic Progress Localization resolves this by casting manipulation as a \textbf{hybrid state transition system}: each trajectory is mapped onto an ordered sequence of $Z$ task phases via an abstraction $\psi(s_i) \in \{1,\dots,Z\}$, where phase transitions are triggered by privileged simulator predicates over end-effector pose $p_{\mathrm{tcp}}$, object poses $p_{\mathrm{obj(s)}}$, gripper grasping state $d_{\mathrm{grasp}}\ /\  F_\mathrm{grasp}$, and goal alignment $\mathbb{I}_{\mathrm{goal}}$. Within each phase $z$, a continuous intra-phase metric $\phi_z(s_i) \in [0,1]$ measures proximity to the next phase boundary. The state potential is:
\begin{equation}
    \Phi(s_i) = W_{z-1} + w_z \cdot \phi_z(s_i),
    \label{eq:potential}
\end{equation}
where $\mathbf{w} = (w_1,\dots,w_Z)$ with $\sum_j w_j = 1$ allocates credit across phases, and $W_{z-1} = \sum_{j<z} w_j$ is the cumulative baseline for phase $z$. By projecting trajectories onto this transition system, the semantic credit and the failure-boundary-reaching phase are jointly localized as:
\begin{equation}
    c_{\mathrm{SPL}} = \max_i\,\Phi(s_i),
    \qquad
    \hat{z}_{\max} = \max_i\,\psi(s_i).
    \label{eq:spl}
\end{equation}
The cumulative baseline $W_{z-1}$ acts as a \textit{discrete gate}: credit for phase $z$ is unattainable without completing phase $z{-}1$, eliminating shortcut behaviors that exploit Euclidean proximity without task progression. The intra-phase term $w_z\phi_z$ meanwhile preserves \textit{continuous gradient signal} within each phase.
The credit $c_{\mathrm{SPL}}$ thus simultaneously serves as a \textbf{diagnostic signal}---revealing \textit{where} the competence boundary lies---and as a \textbf{stage-level training signal} preserving partial supervision for near-miss rollouts. SPL produces a stage-indexed reward map $\{r_z\}_{z=1}^{Z}$ that integrates both roles:
\begin{equation}
    r_{z=\psi(s_i)} =
    \begin{cases}
        1 - \eta\,\dfrac{i_{\mathrm{end}} - K_{\min}}{K - K_{\min}}, & c_{\mathrm{SPL}} = 1.0 \quad\text{(success)}, \\
        c_{\mathrm{SPL}}, & c_{\mathrm{SPL}} < 1.0, \ z < \hat{z}_{\max}\quad\text{(completed stage)}, \\[6pt]
        0, & c_{\mathrm{SPL}} < 1.0, \ z = \hat{z}_{\max} \quad\text{(failure locus)},
    \end{cases}
    \label{eq:reward}
\end{equation}
where $i_{\mathrm{end}}$ is the termination step, $K_{\min}$ is the minimum steps for flawless execution, and $\eta \in (0,1)$ scales the efficiency penalty. Completed stages share the trajectory credit $c_{\mathrm{SPL}}$, while the failure locus $\hat{z}_{\max}$ receives zero credit---concentrating the corrective signal at the exact stage where execution breaks down. Stages $z > \hat{z}_{\max}$ are not visited and contribute no tuples to $\mathcal{B}$. Each buffer tuple thus carries a phase-constant reward $r_i = r_{\psi(s_i)}$ and signed label $y = 2r_{\psi(s_i)} - 1$, providing the per-tuple training signal for directional boundary shaping (Sec.~\ref{sub:dbs}). SPL thereby operates as a \textit{training-side label generator} requiring no learned components or human annotation.

\subsubsection{Directional Boundary Shaping (DBS)}
\label{sub:dbs}

Given the phase-constant signed label $y \in [-1, 1]$ defined above, we apply directional boundary shaping directly in the flow velocity field, rather than in the action space, which would require action log-likelihoods or critic networks. For each sampled tuple in $\mathcal{B}$, the current policy produces a stochastic prediction $v_{\theta,t} = \pi_\theta(a_{t}, t, o, \ell)$ and two mirrored branches are constructed around the deviation $\Delta v_t = v_{\theta,t} - v_{t}^{\mathrm{old}}$ from the old rollout policy:
\begin{equation}
    v_{\theta,t}^{\pm} = v_{t}^{\mathrm{old}} \pm \beta\,\Delta v_t,
    \label{eq:branches}
\end{equation}
where $\beta > 0$ controls the shaping intensity. Each branch induces a transition mean $\mu_{\theta,t}^{\pm}$ via Eq.~\ref{eq:affine_mean}, and the step-wise Mahalanobis errors against the observed next state are:
\begin{equation}
    E_{\theta,t}^{\pm} = \left\| a_{t}^{\mathrm{next}} - \mu_{\theta,t}^{\pm} \right\|_{\Sigma_t^{-1}}^{2}.
    \label{eq:errors}
\end{equation}
$\mathcal{L}_{\mathrm{DBS}}$ leverages the signed label $y$ as a ranking constraint on the two error branches:
\begin{equation}
    \mathcal{L}_{\text{DBS}}(\theta) = \mathbb{E}_{(\cdot) \sim \mathcal{B}}\left[\text{Softplus}\left(\tfrac{1}{2} y \cdot (E^+_{\theta,t} - E^-_{\theta,t})\right)\right]
    \label{eq:dbs}
\end{equation}
When $y{>}0$, gradient descent drives $E_{\theta,t}^{+} < E_{\theta,t}^{-}$, steering the velocity field \textit{toward} the success-producing direction; when $y{<}0$, the inequality reverses, pushing it \textit{away} from failure-producing directions. Softplus smoothly bounds gradient magnitudes, preventing the runaway updates common in naive negative-imitation objectives. To guard against catastrophic forgetting of foundational skills~\cite{kirkpatrick2017overcoming}, DBS is regularized by the real-demonstration SFT anchor:
\begin{equation}
    \mathcal{L}_{\mathrm{total}}(\theta;\mathcal{D}_\mathrm{real}) = \mathcal{L}_{\mathrm{DBS}}(\theta) + \lambda\,\mathcal{L}_{\mathrm{SFT}}(\theta;\,\mathcal{D}_{\mathrm{real}}).
    \label{eq:total}
\end{equation}
After each iteration, the rollout policy is synchronized via EMA: $\theta^{\mathrm{old}} \leftarrow \epsilon \theta^{\mathrm{old}} + (1{-}\epsilon)\theta$, with $\epsilon$ annealed upward over training to balance early boundary discovery with late-stage stability. The exploration buffer $\mathcal{B}$ is cleared after each synchronization to maintain strictly on-policy boundary shaping and prevent stale velocities from biasing the velocity field.

\section{Experiment}
\label{sec:experiment}

We evaluate the DLS pipeline for failure-boundary learning by addressing three questions: \textbf{(\romannumeral1)} Does the discover--localize--shape pipeline yield consistent improvements over SFT baselines and alternative RL methods across tasks and backbones? \textbf{(\romannumeral2)} Is semantic progress localization the critical driver of improvement---over both binary-reward RL and alternative progress-aware reward designs? \textbf{(\romannumeral3)} How sensitive is the pipeline to key choices like regularization strength, and real data scale?

\subsection{Experimental Setup}
\label{sub:exp_settings}

\textbf{Tasks.}
We validate the DLS pipeline on three real-robot manipulation tasks spanning distinct behavioral regimes and precision demands: \textbf{Pick \& Place} (grasp from a randomized $30{\times}30$\,cm workspace with centimeter-precision placement on a coaster), \textbf{Sweep} (non-prehensile push into a $12{\times}10$\,cm target region), and \textbf{Plugin} (tight-tolerance peg-in-socket insertion with ${\sim}3$\,mm clearance).

\begin{wrapfigure}{r}{0.6\textwidth}
    \vspace{-0.4cm}
    \centering
    \includegraphics[width=\linewidth]{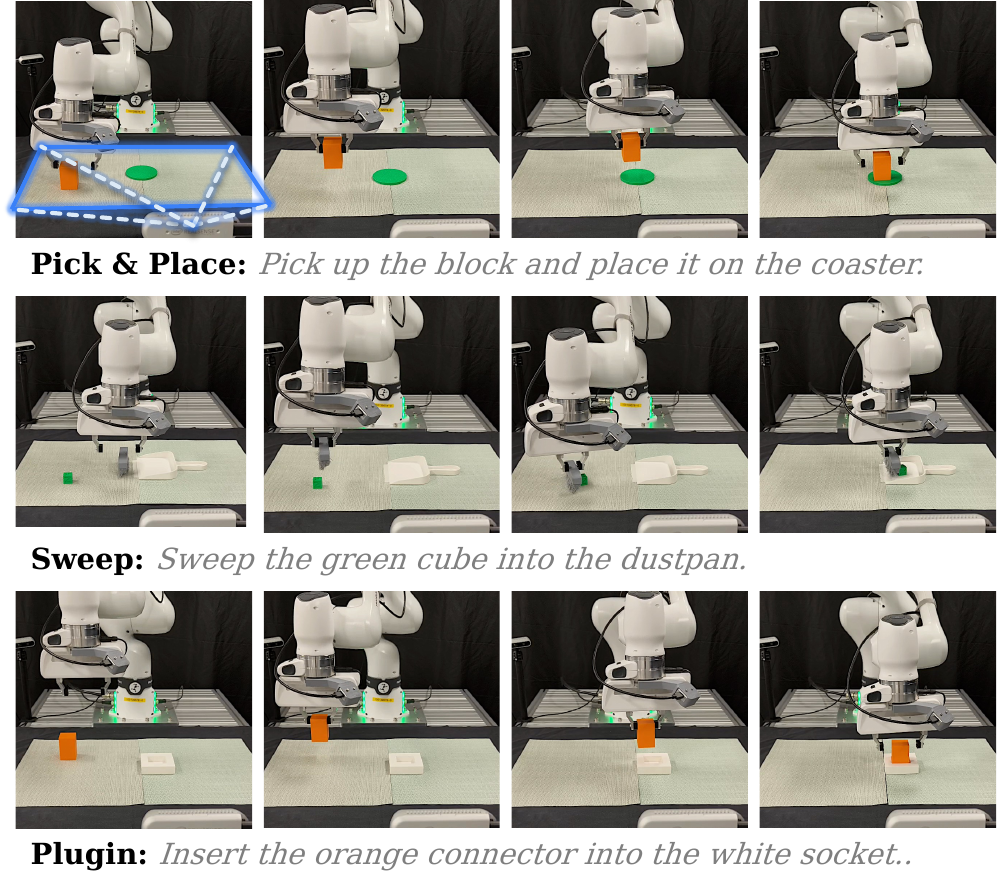}
    \vspace{-0.6cm}
    \caption{\textbf{Visualization of Real-robot Tasks Execution.}}
    \label{fig_tasks}
    \vspace{-0.5cm}
\end{wrapfigure}

\textbf{Evaluation Protocol.}
Each policy is evaluated over 30 real-robot rollouts under two conditions: \textbf{(\romannumeral1)} \textit{in-distribution} (\textit{ID.}) with 980~Lux unified lighting and standard initial position, and \textbf{(\romannumeral2)} \textit{unseen} (\textit{Unseen.}) with altered tablecloth and background, 560~Lux single-side illumination, and randomized object positions.

\textbf{Hardware \& Data.}
All experiments use a Franka Research~3 manipulator observed by a single front-facing RealSense D435 RGB camera ($640{\times}480$, $30$\,fps), 10\,Hz control, action chunk $H{=}8$; 50 real demonstrations per task collected via teleoperation. The ManiSkill~\cite{tao2024maniskill3} digital twin is geometrically calibrated from hand-eye matrices; MimicGen~\cite{mandlekar2023mimicgen} synthesizes 1,000 simulation trajectories from 5 seeds with camera perturbations (${\pm}1^\circ$, ${\pm}2$\,cm)~\cite{xie2026multi}.


\textbf{Baselines.}
We consider two groups. \textbf{(\romannumeral1)} \textit{Full-SFT}: \textbf{Real-only} fine-tunes on $N{=}50$ or $N{=}100$ real demonstrations.
\textbf{(\romannumeral2)} \textit{Few-shot SFT\,+\,RL} (all at $N{=}50$): \textbf{Sim-Real SFT} is the co-trained prior without online RL; \textbf{+\,GRPO}~\cite{liu2025flowgrpo} and \textbf{+\,PPO}~\cite{zhang2025reinflow} apply standard RL objectives---the former requiring intra-group rollout comparison, the latter a learned critic head---both absent in DLS.


\textbf{Implementation.}
We instantiate DLS on two open-source flow-matching VLAs: $\pi_0$~\cite{black2024pi_0} and $\pi_{0.5}$~\cite{intelligence2025pi_05}. \textit{Stage~1}: co-training uses Sim-Real ratio $\alpha{=}0.5$, learning rate $2.5{\times}10^{-5}$ (cosine decay, 500-step warmup); \textit{Stage~2}: explores and shapes with $N{=}128$ parallel digital twins with 5 gradient update epochs per micro batch, learning rate $8{\times}10^{-6}$; branch scale $\beta{=}1.0$; clean-success threshold 100 steps; efficiency penalty $\eta{=}0.1$. A rollout is counted as success upon stable task completion within $K{=}400$ environment steps; all training uses 4${\times}$NVIDIA H200 GPUs.


\subsection{Main Results}
\label{sub:main_results}

\begin{table}[t]
\centering
\caption{\textbf{Main success rates (\%)} across three real-robot manipulation tasks under in-distribution (\textit{ID.}) and unseen conditions (\textit{Unseen.}). Each cell: success rate over 30 real-world rollouts; $N$ = real demonstrations. Within each backbone, \textbf{bold}: highest per column; \underline{underline}: second best.}
\label{tab:eval_results}
\vspace{0.5em}
\resizebox{\textwidth}{!}{%
\small
\begin{tabular}{ll cc cc cc cc}
\toprule
\multirow{2}{*}{Model} & \multirow{2}{*}{Training Setting}
  & \multicolumn{2}{c}{Pick \& Place}
  & \multicolumn{2}{c}{Sweep}
  & \multicolumn{2}{c}{Plugin}
  & \multicolumn{2}{c}{Avg.~($\uparrow$)} \\
\cmidrule(lr){3-4}\cmidrule(lr){5-6}\cmidrule(lr){7-8}\cmidrule(lr){9-10}
 & & \textit{ID.} & \textit{Unseen.} & \textit{ID.} & \textit{Unseen.} & \textit{ID.} & \textit{Unseen.} & \textit{ID.} & \textit{Unseen.} \\
\midrule

\multirow{9}{*}{$\pi_0$}


  & \multicolumn{9}{l}{\cellcolor{lightgrey}\textit{\textcolor[RGB]{105,105,105}{\# Full SFT}}} \\

  & Real-only (\textit{N = 50})
    & 66.7 & 60.0 & 53.3 & 46.7 & 73.3 & 43.3
    & 64.4 & 50.0 \\

  & Real-only (\textit{N = 100})
    & \underline{83.3} & 63.3 & \underline{66.7} & 50.0 & \underline{80.0} & 50.0
    & 76.7 & 54.4 \\

    
\cmidrule(lr){2-10}
  & \multicolumn{9}{l}{\cellcolor{lightgrey}\textit{\textcolor[RGB]{105,105,105}{\# Few-shot SFT + RL}}} \\

  & Sim-Real SFT (\textit{N = 50})
    & 66.7 & 50.0 & 43.3 & 40.0 & 56.7 & 30.0
    & 55.6 & 40.0 \\

  & + GRPO~\cite{liu2025flowgrpo}
    & 73.3 & 70.0 & 60.0 & 43.3 & \underline{80.0} & 60.0
    & 71.1 & 57.8 \\

  & + PPO~\cite{zhang2025reinflow}
    & \textbf{86.7} & \underline{76.7} & \underline{66.7} & \underline{56.7} & \underline{80.0} & \underline{63.3}
    & \underline{77.8} & \underline{65.6} \\

  & \cellcolor{lightgreen}\textbf{DLS (ours)}
    & \cellcolor{lightgreen}\underline{83.3} & \cellcolor{lightgreen}\textbf{80.0} & \cellcolor{lightgreen}\textbf{70.0} & \cellcolor{lightgreen}\textbf{63.3} & \cellcolor{lightgreen}\textbf{83.3} & \cellcolor{lightgreen}\textbf{70.0}
    & \cellcolor{lightgreen}\textbf{78.9} & \cellcolor{lightgreen}\textbf{71.1} \\

\midrule

\multirow{9}{*}{$\pi_{0.5}$}

    
  & \multicolumn{9}{l}{\cellcolor{lightgrey}\textit{\textcolor[RGB]{105,105,105}{\# Full SFT}}} \\

  & Real-only (\textit{N = 50})
    & 73.3 & 60.0 & 56.7 & 46.7 & 73.3 & 50.0
    & 67.8 & 52.2 \\

  & Real-only (\textit{N = 100})
    & 83.3 & 63.3 & \textbf{73.3} & 46.7 & 80.0 & 53.3
    & 78.9 & 54.4 \\

    
\cmidrule(lr){2-10}
  & \multicolumn{9}{l}{\cellcolor{lightgrey}\textit{\textcolor[RGB]{105,105,105}{\# Few-shot SFT + RL}}} \\

  & Sim-Real SFT (\textit{N = 50})
    & 63.3 & 50.0 & 50.0 & 40.0 & 56.7 & 33.3
    & 56.7 & 41.1 \\

  & + GRPO~\cite{liu2025flowgrpo}
    & 76.7 & 70.0 & 63.3 & 56.7 & 83.3 & \underline{66.7}
    & 74.4 & 64.4 \\

  & + PPO~\cite{zhang2025reinflow}
    & \underline{86.7} & \underline{73.3} & \underline{70.0} & \underline{60.0} & \underline{86.7} & \textbf{70.0}
    & \underline{81.1} & \underline{67.8} \\

  & \cellcolor{lightgreen}\textbf{DLS (ours)}
    & \cellcolor{lightgreen}\textbf{90.0} & \cellcolor{lightgreen}\textbf{80.0} & \cellcolor{lightgreen}\textbf{73.3} & \cellcolor{lightgreen}\textbf{66.7} & \cellcolor{lightgreen}\textbf{90.0} & \cellcolor{lightgreen}\textbf{70.0}
    & \cellcolor{lightgreen}\textbf{84.4} & \cellcolor{lightgreen}\textbf{72.2} \\

\bottomrule
\end{tabular}%
}
\vspace{-5pt}
\end{table}

\textbf{(\romannumeral1) Critic-free boundary shaping outperforms, with margins widening under distribution shift.}

Tab.~\ref{tab:eval_results} reports success rates across all tasks, models, and conditions. We surface three findings.
DLS achieves \textbf{84.4\%}\,/\,\textbf{72.2\%} (ID. / Unseen.) on $\pi_{0.5}$ and \textbf{78.9\%}\,/\,\textbf{71.1\%} on $\pi_0$, the highest average success rate across both backbones. The margin over PPO widens from in-distribution to unseen conditions: $+3.3\%{\to}+4.4\%$ on $\pi_{0.5}$ and $+1.1\%{\to}+5.5\%$ on $\pi_0$---without a critic head or intra-group rollout comparisons. Without Stage~1 grounding, direct RL collapses to near-zero success, confirming that a competent behavioral prior is a prerequisite for failure-boundary discovery.


\textbf{(\romannumeral2) 50 real demonstrations surpass a double-budget SFT baseline.}
DLS with 50 demonstrations exceeds \textit{Real-only ($N{=}100$)} on unseen conditions across all tasks, with a \textbf{+17.8\%} absolute margin on $\pi_{0.5}$ (72.2\% vs.\ 54.4\%). Scaling real demonstrations tightens the policy toward the demonstration distribution without expanding its competence boundary; on-policy simulation rollouts surface failure modes no fixed dataset can anticipate. Real demonstrations serve as a grounding anchor; failure-boundary coverage requires the digital twin.


\textbf{(\romannumeral3) Distributional robustness gains consistently exceed in-distribution gains.}
On Plugin ($\pi_{0.5}$), DLS achieves 90.0\%\,/\,70.0\% (ID.\,/\,Unseen.)---a \textbf{+33.3\%}\,/\,\textbf{+36.7\%} gain over the Sim-Real SFT prior. This asymmetry holds uniformly: unseen gains exceed in-distribution gains across most task--backbones combinations (as shown in Tab.~\ref{tab:eval_results}), suggesting that SPL-guided shaping builds structural robustness in out-of-distribution conditions rather than task-specific memorization.


\subsection{Ablation Studies}
\label{sub:ablation}

\textbf{SPL vs.\ Binary Reward.}
Fig.~\ref{fig:rl_curves} compares SPL against binary reward across tasks with random initial object position. Binary reward fails to converge reliably: treating barely-successful and efficiently-successful trajectories identically collapses task progress into a single bit, producing conflicting gradients that cancel near the competence boundary. SPL yields monotonically increasing success rates, achieving faster convergence and higher final performance on all tasks. SPL---pinpointing \textit{where} in task execution failure occurs---is the critical driver of improvement.

\begin{figure*}[htb]
\vspace{-5pt}
    \centering

    \begin{subfigure}[t]{0.33\textwidth}
        \centering
        \includegraphics[width=\linewidth,height=3.65cm]{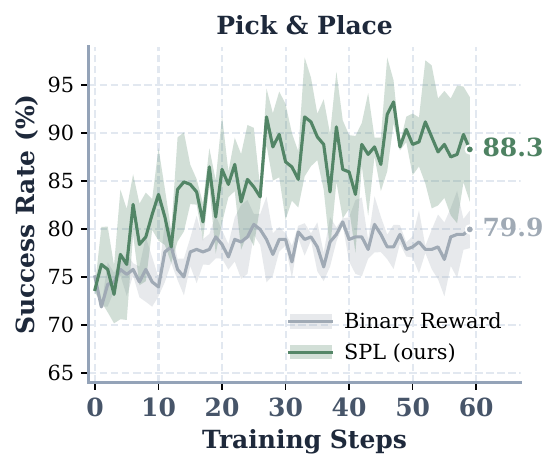}
    \end{subfigure}\hfill
    \begin{subfigure}[t]{0.33\textwidth}
        \centering
        \includegraphics[width=\linewidth,height=3.65cm]{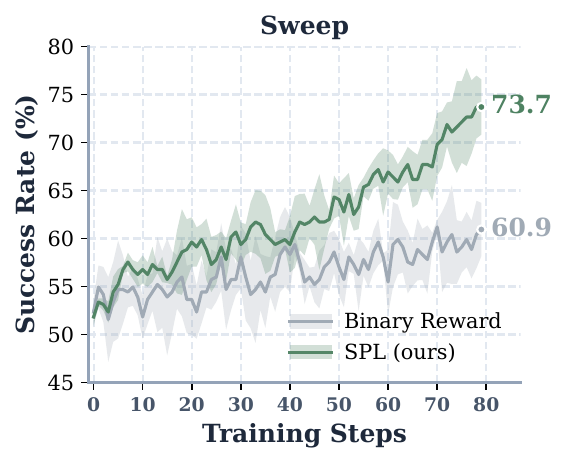}
    \end{subfigure}\hfill
    \begin{subfigure}[t]{0.33\textwidth}
        \centering
        \includegraphics[width=\linewidth,height=3.65cm]{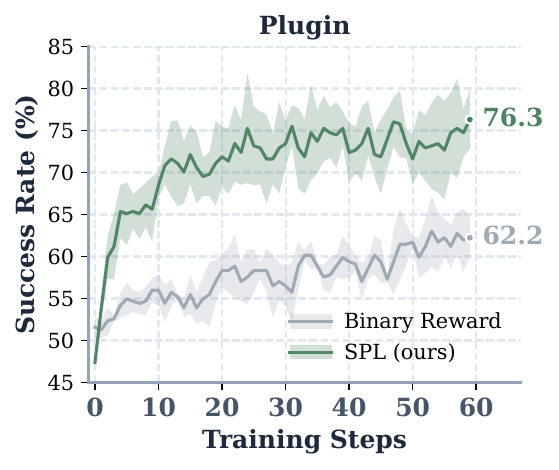}
    \end{subfigure}
    \caption{\textbf{SPL vs.\ Binary Reward: Training Curves on $\pi_{0.5}$.} \textit{Simulation success rate} over training (128 rollouts per evaluation; 40 gradient updates per step). Shaded regions denote $\pm1$ standard deviation. SPL achieves consistently faster convergence and higher final success across all tasks.}
    \label{fig:rl_curves}
\vspace{-5pt}
\end{figure*}

\begin{table}[htb]
\centering
\caption{\textbf{SPL vs.\ alternative progress-aware designs on $\pi_{0.5}$.} Success rate (\textit{Unseen SR.}, \%, mean $\pm$ std over 3 seeds) and average episode horizon (\textit{Hor.}) across all tasks; \textbf{bold}: highest \textit{SR.} per task.}
\label{tab:dense_ablation}
\vspace{0.5em}
\resizebox{\textwidth}{!}{%
\small
\begin{tabular}{l cc cc cc}
\toprule
\multirow{2}{*}{Method}
  & \multicolumn{2}{c}{Pick \& Place}
  & \multicolumn{2}{c}{Sweep}
  & \multicolumn{2}{c}{Plugin} \\
\cmidrule(lr){2-3}\cmidrule(lr){4-5}\cmidrule(lr){6-7}
  & \textit{Unseen SR.}~($\uparrow$) & \textit{Hor.}~($\downarrow$)
  & \textit{Unseen SR.}~($\uparrow$) & \textit{Hor.}~($\downarrow$)
  & \textit{Unseen SR.}~($\uparrow$) & \textit{Hor.}~($\downarrow$) \\
\midrule
Step-count Penalty
  & $60.0 \pm 3.3$ & 279.5
  & $53.3 \pm 3.3$ & 296.1
  & $56.7 \pm 3.3$ & 283.9 \\
Distance-based Potential
  & $66.7 \pm 10.0$ & 287.5
  & $50.0 \pm 9.0$ & 301.4
  & $63.3 \pm 9.0$ & 292.0 \\
\midrule
\rowcolor{lightgreen}
    \textbf{SPL (ours)}
      & $\mathbf{80.0} \pm 3.3$ & 280.3
      & $\mathbf{66.7} \pm 3.3$ & 300.1
      & $\mathbf{70.0} \pm 3.3$ & 287.4 \\
\bottomrule
\end{tabular}%
}
\vspace{-5pt}
\end{table}


\textbf{SPL vs.\ Alternative Progress-aware Reward.}
Tab.~\ref{tab:dense_ablation} isolates the contribution of semantic phase localization against two alternatives on $\pi_{0.5}$: \textit{step-count penalty} discounts successful rollouts by episode length without phase decomposition; and \textit{distance-based potential} substitutes the phase abstraction $\psi$ with continuous Euclidean distance to the goal. SPL outperforms both alternatives on SR by over $6.7\%$,
while also reducing average episode length comparable to the step-count penalty, confirming that semantic phase localization drives both higher task success and faster completion.

\textbf{Sensitivity to SFT Regularization Weight $\lambda$.}
Fig.~\ref{fig_lambda_ablation} sweeps $\lambda$ from $0.0$ to $2.0$ on $\pi_{0.5}$ across all tasks. The success rate (\textit{Unseen.}) follows an inverted-U: removing the anchor ($\lambda{=}0.0$) causes catastrophic forgetting of the real-grounded prior; performance recovers monotonically, peaks within $\lambda \in [0.5, 1.0]$, and declines mildly under over-regularization. This validates the dual-objective design of $\mathcal{L}_{\mathrm{total}}$ (Eq.~\ref{eq:total}): the SFT anchor preserves foundational skills without constraining DBS.

\textbf{Data Efficiency under the Few-shot Regime.}
Fig.~\ref{fig_data_efficiency} ablates real demonstration count on Plugin across both backbones. DLS surpasses Real-only SFT success rate (Unseen.) at all demonstration counts, confirming that on-policy failure discovery in simulation compensates for sparse real-world coverage. Real demonstrations define the deployment domain; the digital twin provides the exploration scale needed to discover failure boundaries no fixed dataset can anticipate.

\begin{figure}[t]
\vspace{-0pt}
    \centering
    \begin{minipage}[t]{0.47\linewidth}
        \centering
        \includegraphics[width=\linewidth]{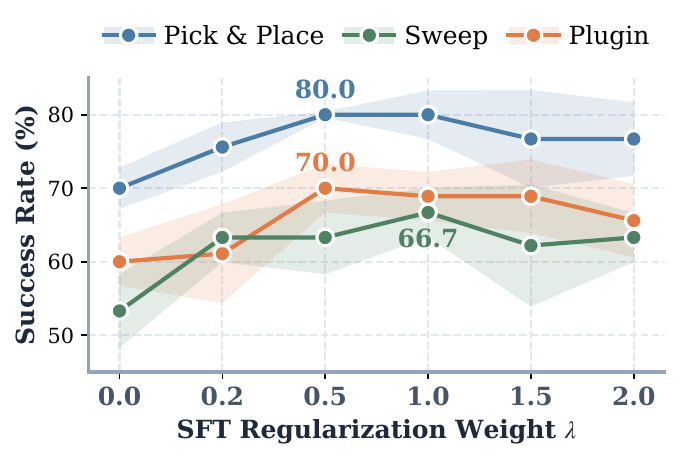}
        \caption{\textbf{Sensitivity to $\lambda$ on $\pi_{0.5}$.} Success rate stabilizes within $[0.5, 1.0]$ and drops sharply when $\lambda{\leq}0.2$. Shaded: $\pm$std over 3 seeds.}
        \label{fig_lambda_ablation}
    \end{minipage}\hfill
    \begin{minipage}[t]{0.50\linewidth}
        \centering
        \includegraphics[width=\linewidth]{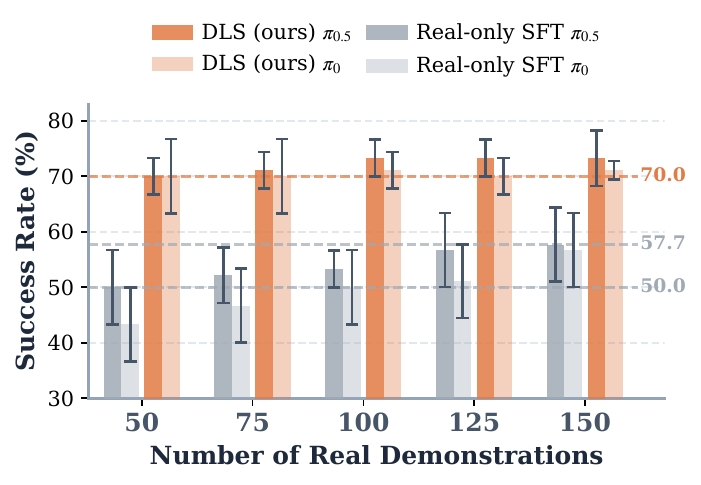}
        \caption{\textbf{Data Efficiency on Plugin.} DLS surpasses Real-only SFT with more real data budgets on both backbones. Shaded: $\pm$std over 3 seeds.}
        \label{fig_data_efficiency}
    \end{minipage}
\vspace{-10pt}
\end{figure}

\vspace{-5pt}
\section{Limitation}
\label{limitation}

DLS assumes a digital twin of sufficient physical fidelity. Residual sim-to-real gaps may attenuate SPL label quality, particularly on tasks with tight contact tolerances. The current pipeline requires pre-existing simulation assets for each task object; extending to novel objects would demand 3D scanning or reconstruction, adding engineering overhead that limits out-of-the-box applicability.

\vspace{-5pt}
\section{Conclusion}
\label{sec:conclusion}


We propose \textbf{DLS}, a pipeline that \textit{discovers} \textbf{failure boundaries} via on-policy digital twin rollouts, \textit{localizes} them through \textbf{\textit{SPL}}'s progress-aware labels, and \textit{shapes} the flow velocity field with \textbf{\textit{DBS}}---without likelihoods or critics. Across three real-robot tasks and two VLA backbones, DLS achieves \textbf{84.4\% / 72.2\%} (ID. / Unseen.) on $\pi_{0.5}$ and \textbf{78.9\% / 71.1\%} on $\pi_0$, surpassing Real-only SFT trained with $N{=}100$ demonstrations by \textbf{+17.8\%} on unseen conditions with half the real data---confirming that the policy's own failure boundary, made legible by a digital twin's privileged state, is a more informative signal for generalization than additional expert demonstrations.

\bibliography{references}  

@article{black2024pi_0,
  title={{$\pi_0$}: A Vision-Language-Action Flow Model for General Robot Control},
  author={Black, Kevin and Brown, Noah and Driess, Danny and Esmail, Adnan and Equi, Michael and Finn, Chelsea and Fusai, Niccolo and Groom, Lachy and Hausman, Karol and Ichter, Brian and others},
  journal={arXiv preprint arXiv:2410.24164},
  year={2024}
}

@article{intelligence2025pi_05,
  title={{$\pi_{0.5}$}: a vision-language-action model with open-world generalization},
  author={Intelligence, Physical and Black, Kevin and Brown, Noah and Darpinian, James and Dhabalia, Karan and Driess, Danny and Esmail, Adnan and Equi, Michael and Finn, Chelsea and Fusai, Niccolo and others},
  journal={arXiv preprint arXiv:2504.16054},
  year={2025}
}

@inproceedings{kalinowska2021ergodic,
  title={Ergodic imitation: Learning from what to do and what not to do},
  author={Kalinowska, Aleksandra and Prabhakar, Ahalya and Fitzsimons, Kathleen and Murphey, Todd},
  booktitle={2021 IEEE International Conference on Robotics and Automation (ICRA)},
  pages={3648--3654},
  year={2021},
  organization={IEEE}
}

@article{pan2025muchado,
  title={Much Ado About Noising: Dispelling the Myths of Generative Robotic Control},
  author={Pan, Chaoyi and Anantharaman, Giri and Huang, Nai-Chieh and Jin, Claire and Pfrommer, Daniel and Yuan, Chenyang and Permenter, Frank and Qu, Guannan and Boffi, Nicholas and Shi, Guanya and others},
  journal={arXiv preprint arXiv:2512.01809},
  year={2025}
}

@article{liu2025flowgrpo,
  title={Flow-grpo: Training flow matching models via online rl},
  author={Liu, Jie and Liu, Gongye and Liang, Jiajun and Li, Yangguang and Liu, Jiaheng and Wang, Xintao and Wan, Pengfei and Zhang, Di and Ouyang, Wanli},
  journal={arXiv preprint arXiv:2505.05470},
  year={2025}
}

@article{chen2025pirl,
  title={{$\pi_{RL}$}: Online RL Fine-tuning for Flow-based Vision-Language-Action Models},
  author={Chen, Kang and Liu, Zhihao and Zhang, Tonghe and Guo, Zhen and Xu, Si and Lin, Hao and Zang, Hongzhi and Li, Xiang and Zhang, Quanlu and Yu, Zhaofei and others},
  journal={arXiv preprint arXiv:2510.25889},
  year={2025}
}

@article{li2025grrl,
  title={Gr-rl: Going dexterous and precise for long-horizon robotic manipulation},
  author={Li, Yunfei and Ma, Xiao and Xu, Jiafeng and Cui, Yu and Cui, Zhongren and Han, Zhigang and Huang, Liqun and Kong, Tao and Liu, Yuxiao and Niu, Hao and others},
  journal={arXiv preprint arXiv:2512.01801},
  year={2025}
}

@article{intelligence2025pi06*,
  title={{$\pi_{0.6}^*$}: a VLA That Learns From Experience},
  author={Intelligence, Physical and Amin, Ali and Aniceto, Raichelle and Balakrishna, Ashwin and Black, Kevin and Conley, Ken and Connors, Grace and Darpinian, James and Dhabalia, Karan and DiCarlo, Jared and others},
  journal={arXiv preprint arXiv:2511.14759},
  year={2025}
}

@article{zheng2025diffusionnft,
  title={DiffusionNFT: Online diffusion reinforcement with forward process},
  author={Zheng, Kaiwen and Chen, Huayu and Ye, Haotian and Wang, Haoxiang and Zhang, Qinsheng and Jiang, Kai and Su, Hang and Ermon, Stefano and Zhu, Jun and Liu, Ming-Yu},
  journal={arXiv preprint arXiv:2509.16117},
  year={2025}
}

@article{wang2026pistepnft,
  title={{$\pi$}-StepNFT: Wider Space Needs Finer Steps in Online RL for Flow-based VLAs},
  author={Wang, Siting and Wang, Xiaofeng and Zhu, Zheng and Pei, Minnan and Cui, Xinyu and Deng, Cheng and Zhao, Jian and Huang, Guan and Zhang, Haifeng and Wang, Jun},
  journal={arXiv preprint arXiv:2603.02083},
  year={2026}
}

@article{tao2024maniskill3,
  title={Maniskill3: Gpu parallelized robotics simulation and rendering for generalizable embodied ai},
  author={Tao, Stone and Xiang, Fanbo and Shukla, Arth and Qin, Yuzhe and Hinrichsen, Xander and Yuan, Xiaodi and Bao, Chen and Lin, Xinsong and Liu, Yulin and Chan, Tse-kai and others},
  journal={arXiv preprint arXiv:2410.00425},
  year={2024}
}

@article{mandlekar2023mimicgen,
  title={MimicGen: A data generation system for scalable robot learning using human demonstrations},
  author={Mandlekar, Ajay and Nasiriany, Soroush and Wen, Bowen and Akinola, Iretiayo and Narang, Yashraj and Fan, Linxi and Zhu, Yuke and Fox, Dieter},
  journal={arXiv preprint arXiv:2310.17596},
  year={2023}
}

@article{brohan2022rt1,
  title={Rt-1: Robotics transformer for real-world control at scale},
  author={Brohan, Anthony and Brown, Noah and Carbajal, Justice and Chebotar, Yevgen and Dabis, Joseph and Finn, Chelsea and Gopalakrishnan, Keerthana and Hausman, Karol and Herzog, Alex and Hsu, Jasmine and others},
  journal={arXiv preprint arXiv:2212.06817},
  year={2022}
}

@inproceedings{zitkovich2023rt2,
  title={Rt-2: Vision-language-action models transfer web knowledge to robotic control},
  author={Zitkovich, Brianna and Yu, Tianhe and Xu, Sichun and Xu, Peng and Xiao, Ted and Xia, Fei and Wu, Jialin and Wohlhart, Paul and Welker, Stefan and Wahid, Ayzaan and others},
  booktitle={Conference on Robot Learning},
  pages={2165--2183},
  year={2023},
  organization={PMLR}
}

@article{kim2024openvla,
  title={Openvla: An open-source vision-language-action model},
  author={Kim, Moo Jin and Pertsch, Karl and Karamcheti, Siddharth and Xiao, Ted and Balakrishna, Ashwin and Nair, Suraj and Rafailov, Rafael and Foster, Ethan and Lam, Grace and Sanketi, Pannag and others},
  journal={arXiv preprint arXiv:2406.09246},
  year={2024}
}

@article{kim2025openvlaoft,
  title={Fine-tuning vision-language-action models: Optimizing speed and success},
  author={Kim, Moo Jin and Finn, Chelsea and Liang, Percy},
  journal={arXiv preprint arXiv:2502.19645},
  year={2025}
}

@inproceedings{mees2024octo,
  title={Octo: An open-source generalist robot policy},
  author={Mees, Oier and Ghosh, Dibya and Pertsch, Karl and Black, Kevin and Walke, Homer Rich and Dasari, Sudeep and Hejna, Joey and Kreiman, Tobias and Xu, Charles and Luo, Jianlan and others},
  booktitle={First Workshop on Vision-Language Models for Navigation and Manipulation at ICRA 2024},
  year={2024}
}

@article{bjorck2025gr00t,
  title={Gr00t n1: An open foundation model for generalist humanoid robots},
  author={Bjorck, Johan and Casta{\~n}eda, Fernando and Cherniadev, Nikita and Da, Xingye and Ding, Runyu and Fan, Linxi and Fang, Yu and Fox, Dieter and Hu, Fengyuan and Huang, Spencer and others},
  journal={arXiv preprint arXiv:2503.14734},
  year={2025}
}

@inproceedings{ross2011dagger,
  title={A reduction of imitation learning and structured prediction to no-regret online learning},
  author={Ross, St{\'e}phane and Gordon, Geoffrey and Bagnell, Drew},
  booktitle={Proceedings of the fourteenth international conference on artificial intelligence and statistics},
  pages={627--635},
  year={2011},
  organization={JMLR Workshop and Conference Proceedings}
}

@article{zhang2025reinflow,
  title={ReinFlow: Fine-tuning flow matching policy with online reinforcement learning},
  author={Zhang, Tonghe and Yu, Chao and Su, Sichang and Wang, Yu},
  journal={arXiv preprint arXiv:2505.22094},
  year={2025}
}

@article{prabhudesai2023alignprop,
  title={Aligning Text-to-Image Diffusion Models with Reward Backpropagation},
  author={Prabhudesai, Mihir and Goyal, Anirudh and Pathak, Deepak and Fragkiadaki, Katerina},
  journal={arXiv preprint arXiv:2310.03739},
  year={2023}
}

@article{rafailov2023dpo,
  title={Direct preference optimization: Your language model is secretly a reward model},
  author={Rafailov, Rafael and Sharma, Archit and Mitchell, Eric and Manning, Christopher D and Ermon, Stefano and Finn, Chelsea},
  journal={Advances in neural information processing systems},
  volume={36},
  pages={53728--53741},
  year={2023}
}

@inproceedings{tobin2017domainrandom,
  title={Domain randomization for transferring deep neural networks from simulation to the real world},
  author={Tobin, Josh and Fong, Rachel and Ray, Alex and Schneider, Jonas and Zaremba, Wojciech and Abbeel, Pieter},
  booktitle={2017 IEEE/RSJ international conference on intelligent robots and systems (IROS)},
  pages={23--30},
  year={2017},
  organization={IEEE}
}

@inproceedings{zhao2020simtoreal,
  title={Sim-to-real transfer in deep reinforcement learning for robotics: a survey},
  author={Zhao, Wenshuai and Queralta, Jorge Pe{\~n}a and Westerlund, Tomi},
  booktitle={2020 IEEE symposium series on computational intelligence (SSCI)},
  pages={737--744},
  year={2020},
  organization={IEEE}
}

@article{jiang2025gsworld,
  title={Gsworld: Closed-loop photo-realistic simulation suite for robotic manipulation},
  author={Jiang, Guangqi and Chang, Haoran and Qiu, Ri-Zhao and Liang, Yutong and Ji, Mazeyu and Zhu, Jiyue and Dong, Zhao and Zou, Xueyan and Wang, Xiaolong},
  journal={arXiv preprint arXiv:2510.20813},
  year={2025}
}

@inproceedings{wu2025rlgsbridge,
  title={Rl-gsbridge: 3d gaussian splatting based real2sim2real method for robotic manipulation learning},
  author={Wu, Yuxuan and Pan, Lei and Wu, Wenhua and Wang, Guangming and Miao, Yanzi and Xu, Fan and Wang, Hesheng},
  booktitle={2025 IEEE International Conference on Robotics and Automation (ICRA)},
  pages={192--198},
  year={2025},
  organization={IEEE}
}

@article{xu2026twinrl,
  title={TwinRL-VLA: Digital Twin-Driven Reinforcement Learning for Real-World Robotic Manipulation},
  author={Xu, Qinwen and Liu, Jiaming and Zhou, Rui and Shi, Shaojun and Han, Nuowei and Liu, Zhuoyang and Gu, Chenyang and Gu, Shuo and Yue, Yang and Huang, Gao and others},
  journal={arXiv preprint arXiv:2602.09023},
  year={2026}
}

@article{maddukuri2025simandrealcotraining,
  title={Sim-and-real co-training: A simple recipe for vision-based robotic manipulation},
  author={Maddukuri, Abhiram and Jiang, Zhenyu and Chen, Lawrence Yunliang and Nasiriany, Soroush and Xie, Yuqi and Fang, Yu and Huang, Wenqi and Wang, Zu and Xu, Zhenjia and Chernyadev, Nikita and others},
  journal={arXiv preprint arXiv:2503.24361},
  year={2025}
}

@inproceedings{nasiriany2024robocasa,
  title={RoboCasa: Large-Scale Simulation of Everyday Tasks for Generalist Robots},
  author={Nasiriany, Soroush and Maddukuri, Abhiram and Zhang, Lance and Parikh, Adeet and Lo, Aaron and Joshi, Abhishek and Mandlekar, Ajay and Zhu, Yuke},
  booktitle={RSS 2024 Workshop: Data Generation for Robotics},
  year={2024}
}

@article{shi2026rlco,
  title={Beyond Imitation: Reinforcement Learning-Based Sim-Real Co-Training for VLA Models},
  author={Shi, Liangzhi and Chen, Shuaihang and Gao, Feng and Chen, Yinuo and Chen, Kang and Zhang, Tonghe and Zang, Hongzhi and Zhang, Weinan and Yu, Chao and Wang, Yu},
  journal={arXiv e-prints},
  pages={arXiv--2602},
  year={2026}
}

@article{guo2026vlaw,
  title={Vlaw: Iterative co-improvement of vision-language-action policy and world model},
  author={Guo, Yanjiang and Lee, Tony and Shi, Lucy Xiaoyang and Chen, Jianyu and Liang, Percy and Finn, Chelsea},
  journal={arXiv preprint arXiv:2602.12063},
  year={2026}
}

@article{xie2026multi,
  title={Multi-Camera View Scaling for Data-Efficient Robot Imitation Learning},
  author={Xie, Yichen and Wang, Yixiao and Zhao, Shuqi and Wu, Cheng-En and Tomizuka, Masayoshi and Xie, Jianwen and Fang, Hao-Shu},
  journal={arXiv preprint arXiv:2604.00557},
  year={2026}
}

@article{lipman2022flow,
  title={Flow matching for generative modeling},
  author={Lipman, Yaron and Chen, Ricky TQ and Ben-Hamu, Heli and Nickel, Maximilian and Le, Matt},
  journal={arXiv preprint arXiv:2210.02747},
  year={2022}
}

@inproceedings{zhang2025grape,
  title={GRAPE: Generalizing Robot Policy via Preference Alignment},
  author={Zhang, Zijian and Zheng, Kaiyuan and Chen, Zhaorun and Jang, Joel and Li, Yi and Han, Siwei and Wang, Chaoqi and Ding, Mingyu and Fox, Dieter and Yao, Huaxiu},
  booktitle={ICRA 2025 Workshop on Foundation Models and Neuro-Symbolic AI for Robotics},
  year={2025}
}

@inproceedings{luo2024serl,
  title={Serl: A software suite for sample-efficient robotic reinforcement learning},
  author={Luo, Jianlan and Hu, Zheyuan and Xu, Charles and Tan, You Liang and Berg, Jacob and Sharma, Archit and Schaal, Stefan and Finn, Chelsea and Gupta, Abhishek and Levine, Sergey},
  booktitle={2024 IEEE International Conference on Robotics and Automation (ICRA)},
  pages={16961--16969},
  year={2024},
  organization={IEEE}
}

@article{xu2025stare,
  title={STARE-VLA: Progressive Stage-Aware Reinforcement for Fine-Tuning Vision-Language-Action Models},
  author={Xu, Feng and Zhai, Guangyao and Kong, Xin and Fu, Tingzhong and Gordon, Daniel FN and An, Xueli and Busam, Benjamin},
  journal={arXiv preprint arXiv:2512.05107},
  year={2025}
}

@article{lin2025failsafe,
  title={Failsafe: Reasoning and recovery from failures in vision-language-action models},
  author={Lin, Zijun and Duan, Jiafei and Fang, Haoquan and Fox, Dieter and Krishna, Ranjay and Tan, Cheston and Wen, Bihan},
  journal={arXiv preprint arXiv:2510.01642},
  year={2025}
}

@article{romer2026failure,
  title={Failure prediction at runtime for generative robot policies},
  author={R{\"o}mer, Ralf and Kobras, Adrian and Worbis, Luca and Schoellig, Angela},
  journal={Advances in Neural Information Processing Systems},
  volume={38},
  pages={7631--7670},
  year={2026}
}

@article{team2025gemini,
  title={Gemini robotics: Bringing ai into the physical world},
  author={Team, Gemini Robotics and Abeyruwan, Saminda and Ainslie, Joshua and Alayrac, Jean-Baptiste and Arenas, Montserrat Gonzalez and Armstrong, Travis and Balakrishna, Ashwin and Baruch, Robert and Bauza, Maria and Blokzijl, Michiel and others},
  journal={arXiv preprint arXiv:2503.20020},
  year={2025}
}

@article{xiang2025vlaposttraining,
  title={Parallels between vla model post-training and human motor learning: Progress, challenges, and trends},
  author={Xiang, Tian-Yu and Jin, Ao-Qun and Zhou, Xiao-Hu and Gui, Mei-Jiang and Xie, Xiao-Liang and Liu, Shi-Qi and Wang, Shuang-Yi and Duan, Sheng-Bin and Xie, Fu-Chao and Wang, Wen-Kai and others},
  journal={arXiv preprint arXiv:2506.20966},
  year={2025}
}

@article{kachaev2025dontblind,
  title={Don't Blind Your VLA: Aligning Visual Representations for OOD Generalization},
  author={Kachaev, Nikita and Kolosov, Mikhail and Zelezetsky, Daniil and Kovalev, Alexey K and Panov, Aleksandr I},
  journal={arXiv preprint arXiv:2510.25616},
  year={2025}
}

@inproceedings{xia2025phoenix,
  title={Phoenix: A motion-based self-reflection framework for fine-grained robotic action correction},
  author={Xia, Wenke and Feng, Ruoxuan and Wang, Dong and Hu, Di},
  booktitle={Proceedings of the IEEE/CVF Conference on Computer Vision and Pattern Recognition},
  pages={6981--6990},
  year={2025}
}

@article{welte2026flowcorrect,
  title={FlowCorrect: Efficient Interactive Correction of Generative Flow Policies for Robotic Manipulation},
  author={Welte, Edgar and Shi, Yitian and Wolf, Rosa and Gilles, Maximillian and Rayyes, Rania},
  journal={arXiv preprint arXiv:2602.22056},
  year={2026}
}

@article{zhang2025robustvla,
  title={RobustVLA: Robustness-aware reinforcement post-training for vision-language-action models},
  author={Zhang, Hongyin and Zhang, Shuo and Jin, Junxi and Zeng, Qixin and Li, Runze and Wang, Donglin},
  journal={arXiv preprint arXiv:2511.01331},
  year={2025}
}

@article{kirkpatrick2017overcoming,
  title={Overcoming catastrophic forgetting in neural networks},
  author={Kirkpatrick, James and Pascanu, Razvan and Rabinowitz, Neil and Veness, Joel and Desjardins, Guillaume and Rusu, Andrei A and Milan, Kieran and Quan, John and Ramalho, Tiago and Grabska-Barwinska, Agnieszka and others},
  journal={Proceedings of the national academy of sciences},
  volume={114},
  number={13},
  pages={3521--3526},
  year={2017},
  publisher={National Academy of Sciences}
}

@article{cen2025worldvla,
  title={Worldvla: Towards autoregressive action world model},
  author={Cen, Jun and Yu, Chaohui and Yuan, Hangjie and Jiang, Yuming and Huang, Siteng and Guo, Jiayan and Li, Xin and Song, Yibing and Luo, Hao and Wang, Fan and others},
  journal={arXiv preprint arXiv:2506.21539},
  year={2025}
}

@article{li2024cogact,
  title={Cogact: A foundational vision-language-action model for synergizing cognition and action in robotic manipulation},
  author={Li, Qixiu and Liang, Yaobo and Wang, Zeyu and Luo, Lin and Chen, Xi and Liao, Mozheng and Wei, Fangyun and Deng, Yu and Xu, Sicheng and Zhang, Yizhong and others},
  journal={arXiv preprint arXiv:2411.19650},
  year={2024}
}

@article{zhang2025dreamvla,
  title={Dreamvla: a vision-language-action model dreamed with comprehensive world knowledge},
  author={Zhang, Wenyao and Liu, Hongsi and Qi, Zekun and Wang, Yunnan and Yu, Xinqiang and Zhang, Jiazhao and Dong, Runpei and He, Jiawei and Lu, Fan and Wang, He and others},
  journal={arXiv preprint arXiv:2507.04447},
  year={2025}
}

@article{xu2025can,
  title={Can we detect failures without failure data? uncertainty-aware runtime failure detection for imitation learning policies},
  author={Xu, Chen and Nguyen, Tony Khuong and Dixon, Emma and Rodriguez, Christopher and Miller, Patrick and Lee, Robert and Shah, Paarth and Ambrus, Rares and Nishimura, Haruki and Itkina, Masha},
  journal={arXiv preprint arXiv:2503.08558},
  year={2025}
}

@article{chen2026aca,
  title={Escaping the Diversity Trap in Robotic Manipulation via Anchor-Centric Adaptation},
  author={Chen, Yanzhe and Ma, Kevin Yuchen and Lv, Qi and Lin, Yiqi and Bai, Zechen and Gao, Chen and Shou, Mike Zheng},
  journal={arXiv preprint arXiv:2605.07381},
  year={2026}
}

@article{liu2026rl4vla,
  title={What can rl bring to vla generalization? an empirical study},
  author={Liu, Jijia and Gao, Feng and Wei, Bingwen and Chen, Xinlei and Liao, Qingmin and Wu, Yi and Yu, Chao and Wang, Yu},
  journal={Advances in Neural Information Processing Systems},
  volume={38},
  pages={97121--97151},
  year={2026}
}

@article{li2025simplevlarl,
  title={Simplevla-rl: Scaling vla training via reinforcement learning},
  author={Li, Haozhan and Zuo, Yuxin and Yu, Jiale and Zhang, Yuhao and Yang, Zhaohui and Zhang, Kaiyan and Zhu, Xuekai and Zhang, Yuchen and Chen, Tianxing and Cui, Ganqu and others},
  journal={arXiv preprint arXiv:2509.09674},
  year={2025}
}

@article{shu2025rftf,
  title={Rftf: Reinforcement fine-tuning for embodied agents with temporal feedback},
  author={Shu, Junyang and Lin, Zhiwei and Wang, Yongtao},
  journal={arXiv preprint arXiv:2505.19767},
  year={2025}
}

@article{skalse2022defining,
  title={Defining and characterizing reward gaming},
  author={Skalse, Joar and Howe, Nikolaus and Krasheninnikov, Dmitrii and Krueger, David},
  journal={Advances in Neural Information Processing Systems},
  volume={35},
  pages={9460--9471},
  year={2022}
}

@article{dalal2024plan,
  title={Plan-seq-learn: Language model guided rl for solving long horizon robotics tasks},
  author={Dalal, Murtaza and Chiruvolu, Tarun and Chaplot, Devendra and Salakhutdinov, Ruslan},
  journal={arXiv preprint arXiv:2405.01534},
  year={2024}
}

@article{chen2025robohorizon,
  title={Robohorizon: An llm-assisted multi-view world model for long-horizon robotic manipulation},
  author={Chen, Zixuan and Huo, Jing and Chen, Yangtao and Gao, Yang},
  journal={arXiv preprint arXiv:2501.06605},
  year={2025}
}

@article{zhang2025reinbot,
  title={Reinbot: Amplifying robot visual-language manipulation with reinforcement learning},
  author={Zhang, Hongyin and Zhuang, Zifeng and Zhao, Han and Ding, Pengxiang and Lu, Hongchao and Wang, Donglin},
  journal={arXiv preprint arXiv:2505.07395},
  year={2025}
}

@article{chen2025sarm,
  title={SARM: Stage-Aware Reward Modeling for Long Horizon Robot Manipulation},
  author={Chen, Qianzhong and Yu, Justin and Schwager, Mac and Abbeel, Pieter and Shentu, Yide and Wu, Philipp},
  journal={arXiv preprint arXiv:2509.25358},
  year={2025}
}

@article{escoriza2025multi,
  title={Multi-Stage Manipulation with Demonstration-Augmented Reward, Policy, and World Model Learning},
  author={Escoriza, Adri{\`a} L{\'o}pez and Hansen, Nicklas and Tao, Stone and Mu, Tongzhou and Su, Hao},
  journal={arXiv preprint arXiv:2503.01837},
  year={2025}
}

@inproceedings{kim2025stage,
  title={Stage-wise reward shaping for acrobatic robots: A constrained multi-objective reinforcement learning approach},
  author={Kim, Dohyeong and Kwon, Hyeokjin and Kim, Junseok and Lee, Gunmin and Oh, Songhwai},
  booktitle={2025 IEEE International Conference on Robotics and Automation (ICRA)},
  pages={10268--10274},
  year={2025},
  organization={IEEE}
}

@inproceedings{jiang2025transic,
  title={TRANSIC: Sim-to-Real Policy Transfer by Learning from Online Correction},
  author={Jiang, Yunfan and Wang, Chen and Zhang, Ruohan and Wu, Jiajun and Fei-Fei, Li},
  booktitle={Conference on Robot Learning},
  pages={1691--1729},
  year={2025},
  organization={PMLR}
}


\appendix

\vspace{-10pt}



\section{Implementation Details}
\label{ID}

\subsection{Construction of Digital Twin}

\textbf{Camera Matrices.}
The front-facing RealSense D435 camera undergoes hand-eye calibration. The resulting intrinsic matrix $K$ and camera-to-robot extrinsic matrix $M_{\text{cam}\to\text{robot}}$ are:
\begin{align}
    K &= \begin{bmatrix}
    f_x & 0   & c_x \\
    0   & f_y & c_y \\
    0   & 0   & 1
    \end{bmatrix}
    = \begin{bmatrix}
    607.875 & 0       & 348.961 \\
    0       & 607.719 & 270.486 \\
    0       & 0       & 1
    \end{bmatrix},
    \label{eq:intrinsic}\\[4pt]
    M_{\text{cam}\to\text{robot}}
    &= \begin{bmatrix} R & t \\ 0 & 1 \end{bmatrix}
    = \begin{bmatrix}
     0.028 &  0.217 & -0.975 &  1.100 \\
     0.999 & -0.001 &  0.028 & -0.007 \\
     0.005 & -0.975 & -0.217 &  0.258 \\
     0     &  0     &  0     &  1
    \end{bmatrix},
    \label{eq:extrinsic}
\end{align}
where $f_x, f_y$ are the focal lengths in pixels, $(c_x, c_y)$ is the principal point, $R \in \mathrm{SO}(3)$ is the rotation matrix, and $t \in \mathbb{R}^3$ is the translation vector (in meters). These parameters are directly applied to instantiate a matching camera viewpoint within the ManiSkill simulation.

\begin{figure}[htb]
  \centering
    \includegraphics[width=\linewidth, trim=3 0 0 0, clip]{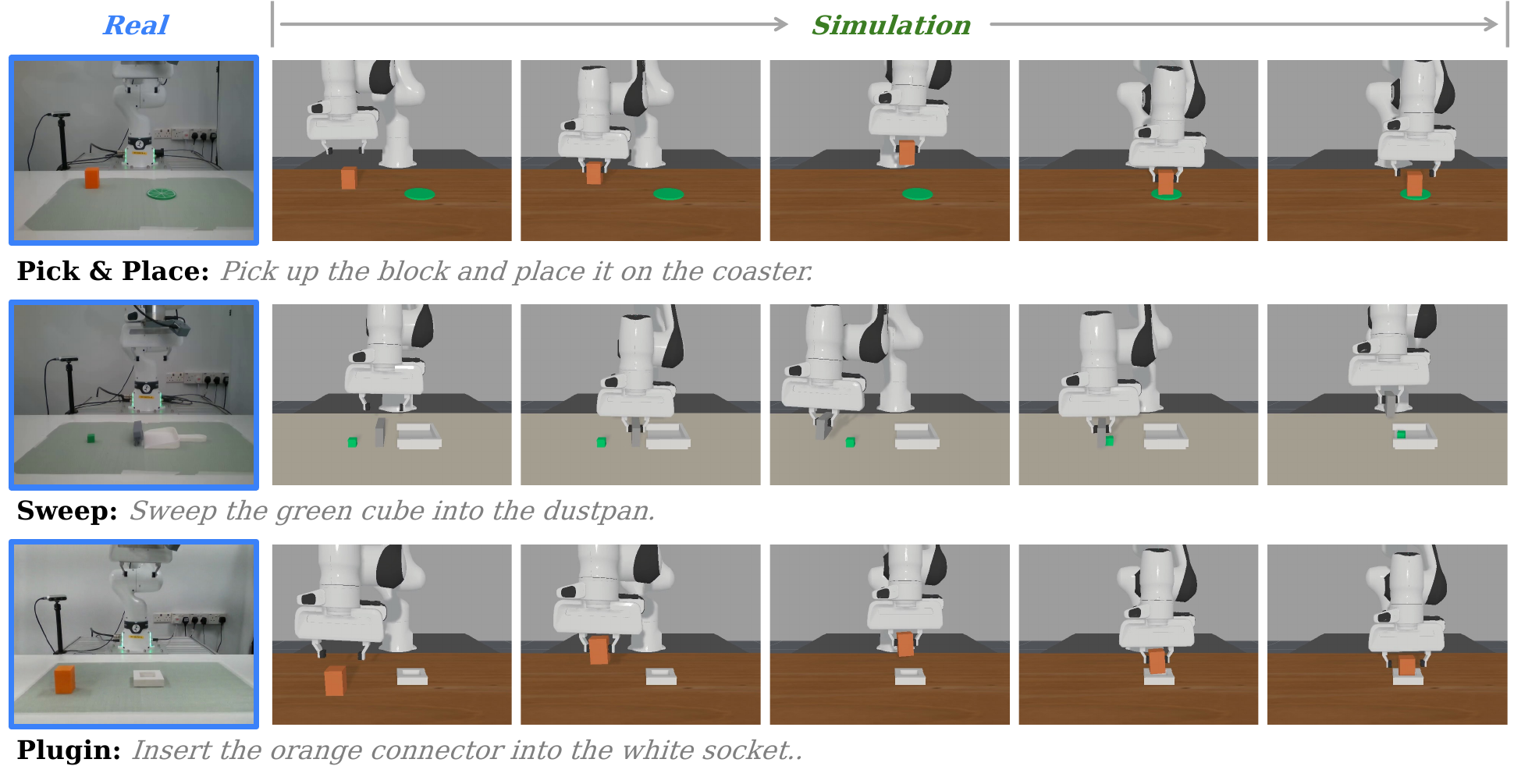}
    \caption{\textbf{Visualization of Task Execution in Digital Twin.} We visualize the tasks in constructed digital twins, showing for each task a real data reference (\textcolor[RGB]{58,129,249}{\textbf{\textit{Real}}}), a representative simulated rollout (\textcolor[RGB]{59,125,35}{\textbf{\textit{Simulation}}}), and the corresponding task instruction prompt (\textcolor[RGB]{127,127,127}{\textbf{\textit{Task Prompt}}}).
    }
  \label{fig:digitaltwin}
\end{figure}

\textbf{Visualization of Digital Twin.}
Fig.~\ref{fig:digitaltwin} provides a side-by-side comparison of the real and simulated visual observations across the three evaluated tasks. During the simulated rollouts, we introduce minor perturbations to both the camera angle (${\pm}\,1^\circ$) and its spatial coordinates (${\pm}\,0.02$\,m) to enhance the policy's robustness against inevitable physical hardware mounting variations. The strong visual alignment between the two domains confirms that the ground-truth predicates utilized by SPL—despite being extracted from privileged simulator states—remain semantically consistent with the raw visual inputs the policy receives during real-world deployment.

\subsection{SPL Phase Specifications Per Task}
\label{sec:spl_specs}

\begin{table}[t]
\centering
\caption{\textbf{SPL Phase Specifications Per Task}.
\textbf{Gate predicate}: condition at phase $z$ that triggers transition to $z{+}1$ (or task success).
Each pose $p=(xyz,\,r)$ decomposes into position $p^{xyz}_{\mathrm{obj}}\in\mathbb{R}^3$ and Euler-angle orientation $r_{\mathrm{obj}}\in\mathbb{R}^3$; superscripts on $p$ select position components (e.g.\ $p^{xy}$).
Notation: $d_z$~initial TCP--target distance at phase $z$ entry; $F_{\mathrm{grasp}}$~grasping contact force; $\mathcal{G}$~the grasped mark; $\dot{p}_{\mathrm{obj(s)}}$~object velocity;
All distances in metres; velocities in m/s; angles in degrees.}
\vspace{0.5em}
\label{tab:spl_phases}
\renewcommand{\arraystretch}{1.4}
\resizebox{\linewidth}{!}{%
\begin{tabular}{@{}l c c l l l@{}}
\toprule
\textbf{Task} & $z$ & $w_z$ & \textbf{Phase} & \textbf{Gate Predicate} ($z \to z{+}1$ or Success) & $\phi_z(s)$ \\
\midrule
\multirow{4}{*}{\shortstack[l]{\textbf{Pick \& Place}\\[2pt]{\footnotesize $\mathrm{obj}(1)$:Block}\\{\footnotesize $\mathrm{obj}(2)$:Coaster}}}
  & 1 & 0.10 & Reach
    & $\|p_{\mathrm{tcp}}-p_{\mathrm{obj}(1)}\|\leq 0.015$
    & $1 - \|p_{\mathrm{tcp}}-p_{\mathrm{obj}(1)}\|\,/\,d_1$ \\
  & 2 & 0.25 & Grasp
    & $d_{\mathrm{grasp}}\leq 0.04\,\wedge\,F_{\mathrm{grasp}}>0$
    & $1 - |d_{\mathrm{grasp}}-0.04|\,/\,0.04$ \\
  & 3 & 0.20 & Transit
    & \textit{G}$\,\wedge\,\|p_{\mathrm{tcp}}^{xy}-p_{\mathrm{obj}(2)}^{xy}\|\leq 0.08\,\wedge\,r_{\mathrm{obj}(1)}^{\mathrm{roll/pitch}}\leq 45^{\circ}$
    & $1 - \|p_{\mathrm{tcp}}^{xy}-p_{\mathrm{obj}(2)}^{xy}\|\,/\,d_3$ \\
  & 4 & 0.45 & Place
    & $\mathbb{I}_{\mathrm{goal}}:\|p_{\mathrm{obj}(1)}^{xy}-p_{\mathrm{obj}(2)}^{xy}\|\leq 0.025\,\wedge\,r_{\mathrm{obj}(1)}^{\mathrm{roll/pitch}}\leq 5^{\circ}$
    & $1 - \|p_{\mathrm{obj}(1)}^{z}-0.025\|\,/\,d_4$ \\
\midrule
\multirow{5}{*}{\shortstack[l]{\textbf{Sweep}\\[2pt]{\footnotesize $\mathrm{obj}(1)$:Broom}\\{\footnotesize $\mathrm{obj}(2)$:Cube}\\{\footnotesize $\mathrm{obj}(3)$:Dustpan}}}
  & 1 & 0.08 & Reach
    & $\|p_{\mathrm{tcp}}^{yz}-p_{\mathrm{obj}(1)}^{yz}\|\leq 0.01\,\wedge\,\|p_{\mathrm{tcp}}^{x}-p_{\mathrm{obj}(1)}^{x}\|\leq 0.02$
    & $1 - \|p_{\mathrm{tcp}}^{xyz}-p_{\mathrm{obj}(1)}^{xyz}\|\,/\,d_1$ \\
  & 2 & 0.20 & Grasp
    & $d_{\mathrm{grasp}}\leq 0.03\,\wedge\,F_{\mathrm{grasp}}>0$
    & $1 - |d_{\mathrm{grasp}}-0.03|\,/\,0.03$ \\
  & 3 & 0.12 & Transit
    & $\mathcal{G}\,\wedge\,p_{\mathrm{tcp}}^{y}\leq p_{\mathrm{obj}(2)}^{y}-0.01\,\wedge\,p_{\mathrm{obj}(1)}^{z}\geq 0.05$
    & $1 - \|p_{\mathrm{tcp}}^{y}-p_{\mathrm{obj}(2)}^{y}+0.01\|\,/\,d_3$ \\
  & 4 & 0.15 & Contact
    & $\mathcal{G}\,\wedge\,\|\dot{p}_{\mathrm{obj}(1)}^{xy}-\dot{p}_{\mathrm{obj}(2)}^{xy}\|\leq0.002\,\wedge\,\dot{p}_{\mathrm{obj}(2)}^{xy}\cdot(p_{\mathrm{obj}(3)}^{xy}-p_{\mathrm{obj}(2)}^{xy})>0$
    & $1 - \|p_{\mathrm{obj}(1)}^{xy}-p_{\mathrm{obj}(2)}^{xy}\|\,/\,d_4$ \\
  & 5 & 0.45 & Push
    & $\mathbb{I}_{\mathrm{goal}}:\,|p_{\mathrm{obj}(2)}^{x}-p_{\mathrm{obj}(3)}^{x}|\leq 0.05\,\wedge\,|p_{\mathrm{obj}(2)}^{y}-p_{\mathrm{obj}(3)}^{y}|\leq 0.06$
    & $1 - \|p_{\mathrm{obj}(2)}^{xy}-p_{\mathrm{obj}(3)}^{xy}\|\,/\,d_5$ \\
\midrule
\multirow{4}{*}{\shortstack[l]{\textbf{Plugin}\\[2pt]{\footnotesize $\mathrm{obj}(1)$:Block}\\{\footnotesize $\mathrm{obj}(2)$:Socket}}}
  & 1 & 0.08 & Reach
    & $\|p_{\mathrm{tcp}}^{xyz}-p_{\mathrm{obj}(1)}^{xyz}\|\leq 0.015$
    & $1 - \|p_{\mathrm{tcp}}^{xyz}-p_{\mathrm{obj}(1)}^{xyz}\|\,/\,d_1$ \\
  & 2 & 0.22 & Grasp
    & $d_{\mathrm{grasp}}\leq 0.04\,\wedge\,F_{\mathrm{grasp}}>0$
    & $1 - |d_{\mathrm{grasp}}-0.04|\,/\,0.04$ \\
  & 3 & 0.15 & Align
    & $\mathcal{G}\,\wedge\,\|p_{\mathrm{tcp}}^{xy}-p_{\mathrm{obj}(2)}^{xy}\|\leq 0.01\,\wedge\,r_{\mathrm{obj}(1)}\leq 5^{\circ}$
    & $1 - \|p_{\mathrm{tcp}}^{xy}-p_{\mathrm{obj}(2)}^{xy}\|\,/\,d_3$ \\
  & 4 & 0.55 & Insert
    & $\mathbb{I}_{\mathrm{goal}}:\|p_{\mathrm{obj}(1)}^{xy}-p_{\mathrm{obj}(2)}^{xy}\|\leq 0.003\,\wedge\,p_{\mathrm{obj}(1)}^{z}< 0.05$
    & $1 - \|p_{\mathrm{obj}(1)}^{z}-0.003\|\,/\,d_4$ \\
\bottomrule
\end{tabular}%
}
\end{table}

\textbf{Task Phases.}
Tab.~\ref{tab:spl_phases} details the phase decomposition, gate predicates, intra-phase progress metrics $\phi_z$, and credit weights $w_z$ for each task.
Pick\,\&\,Place and Plugin each decompose into $Z{=}4$ phases; Sweep uses $Z{=}5$, with an additional \textit{Contact} stage between \textit{Transit} and \textit{Push} that verifies broom--cube velocity alignment and sweep direction before the final push.
Phase $z$ is entered when its gate predicate evaluates to \texttt{true} on the privileged simulator state; within phase $z$, $\phi_z(s_i)\in[0,1]$ measures continuous progress toward the next gate boundary. Following the SPL formulation, these signals combine into the stage-weighted potential $\Phi(s_i) = W_{z-1} + w_z \cdot \phi_z(s_i)$ for $z = \psi(s_i)$, where $W_{z-1} = \sum_{j<z} w_j$ is the cumulative baseline for phase $z$ and $\sum_z w_z = 1$.

\textbf{Weight design.}
The \textit{Grasp} phase is assigned the highest weight among non-final phases, as successful grasping is a crucial prerequisite for all subsequent stages.
\textit{Near-miss}---trajectories that enter the final phase $Z$ but fail to complete it---attain partial progress $\phi_Z^*\in[0,1)$ and thus accumulate less than the full final-phase credit.
With $w_Z{=}0.45$ (Pick\,\&\,Place and Sweep) and $w_Z{=}0.55$ (Plugin), near-miss trajectories typically score below the worst-case successful rollout in practice: $W_{Z-1}+w_Z\phi_Z^* < 1{-}\eta$ for $\eta{=}0.1$, which holds whenever $\phi_Z^*\lesssim 0.78$ (Pick\,\&\,Place and Sweep) and $\phi_Z^*\lesssim 0.82$ (Plugin)---covering the empirically observed near-miss range across all evaluated tasks.

\subsection{Training Dynamics: Label Distribution over Training}

\begin{figure*}[t]
    \centering
    \begin{subfigure}[t]{0.33\textwidth}
        \centering
        \includegraphics[width=\linewidth,height=3.65cm]{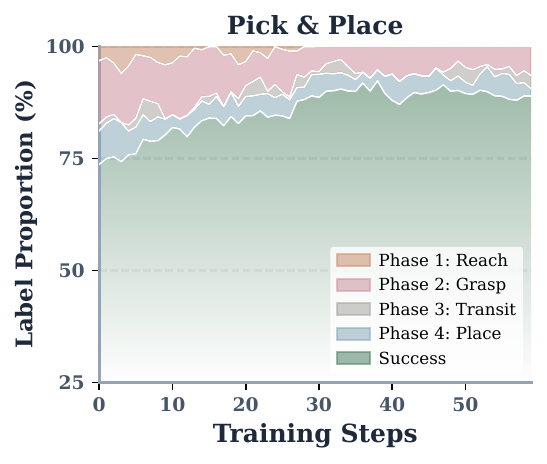}
    \end{subfigure}\hfill
    \begin{subfigure}[t]{0.33\textwidth}
        \centering
        \includegraphics[width=\linewidth,height=3.65cm]{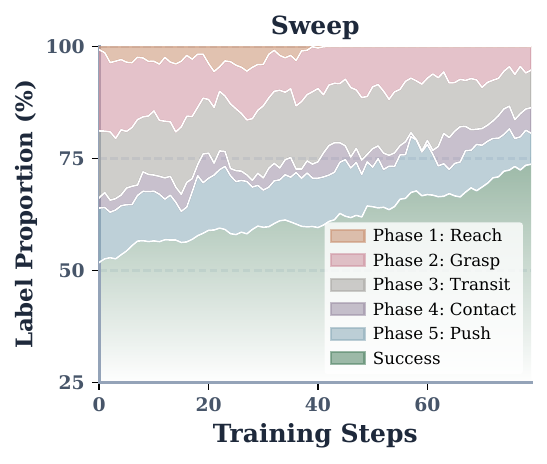}
    \end{subfigure}\hfill
    \begin{subfigure}[t]{0.33\textwidth}
        \centering
        \includegraphics[width=\linewidth,height=3.65cm]{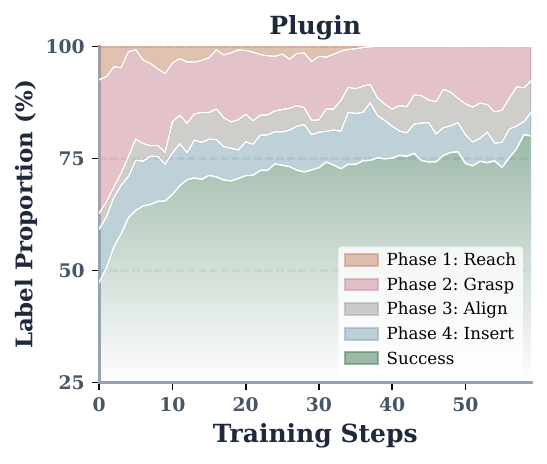}
    \end{subfigure}
    \caption{\textbf{SPL Label Distribution over Training.}
    Each stacked area shows the fraction of buffer tuples labeled at each SPL phase, plotted against training step.
    The \textcolor[RGB]{78,130,98}{\textbf{green}} band (bottom) captures successful trajectories; its height directly tracks the simulation success rate (SR).
    \textcolor[RGB]{201,144,112}{\textbf{Orange}}, 
    \textcolor[RGB]{206,145,159}{\textbf{pink}}, \textcolor[RGB]{168,166,162}{\textbf{gray}}, and \textcolor[RGB]{138,170,184}{\textbf{blue}} correspond to failure-locus phases~1, 2, 3, and $Z$;
    \textcolor[RGB]{155,142,164}{\textbf{lavender}} (Sweep only) denotes phase~4 (\textit{Push}) failure.
    All proportions sum to 100\% at every step.
    }
    \label{fig:label_dist}
\vspace{-10pt}
\end{figure*}

The stacked distributions in Fig.~\ref{fig:label_dist} show the progression of \textbf{failure-boundary migration} throughout DLS training.
At the onset, the buffer is dominated by \textit{Grasp}-phase failure labels across all three tasks, while phase~1 (\textit{Reach}) contracts to near-zero before the midpoint.
As DBS suppresses these failure-producing velocity directions, the active locus migrates toward each task's terminal sub-goal---\textit{Place}, \textit{Insert}, and \textit{Push}---while Sweep additionally sustains a persistent \textit{Transit} component, reflecting the broom--cube alignment prerequisite that must be resolved before the push boundary can be addressed.
Throughout this migration, the \textcolor[RGB]{78,130,98}{\textbf{success}} band expands in lockstep with each resolved failure locus, corroborating that SPL's stage-indexed credit concentrates gradient signal at the active boundary rather than applying a diffuse, undirected push across all denoising steps.

\subsection{Hyperparameter}

\begin{table}[htp]
\vspace{-10pt}
  \centering
  \caption{\textbf{Training Hyperparameters.}
  Parameters are grouped into three blocks: shared model configuration (\textit{Model Config}), stage~1 (\textit{Sim-Real SFT}), and stage~2 (\textit{DBS}).}
  \vspace{0.5em}
  \label{tab:hyperparameters}
  \resizebox{\textwidth}{!}{%
  \small
  \begin{tabular}{lccc}
    \toprule
    \textbf{Parameter} & \textbf{Pick\,\&\,Place} & \textbf{Sweep} & \textbf{Plugin} \\
    \midrule
    \rowcolor{lightgrey}\multicolumn{4}{l}{\textcolor[RGB]{105,105,105}{\textit{Model Config}}} \\
    Denoising steps $T$          & 5             & 5             & 5 \\
    Action chunk / horizon       & 8 / 8         & 8 / 12        & 8 / 8 \\
    Control frequency            & 10\,Hz (stride 2)        & 10\,Hz (stride 2)        & 10\,Hz (stride 2) \\
    \midrule
    \rowcolor{lightgrey}\multicolumn{4}{l}{\textcolor[RGB]{105,105,105}{\textit{Sim-Real SFT}}} \\
    Base model                   & \texttt{lerobot/pi05\_base} & \texttt{lerobot/pi05\_base} & \texttt{lerobot/pi05\_base} \\
    Global / Micro batch         & 128 / 32      & 128 / 32      & 128 / 32 \\
    Learning rate                & $2.5\times10^{-5}$ & $2.5\times10^{-5}$ & $2.5\times10^{-5}$ \\
    Warmup steps                 & 500       & 500       & 500 \\
    Lr schedule                  & Cosine        & Cosine        & Cosine \\
    Co-training ratio $\alpha$   & 0.5           & 0.5           & 0.5 \\
    Total training steps         & 15{,}000      & 20{,}000      & 15{,}000 \\
    \midrule
    \rowcolor{lightgrey}\multicolumn{4}{l}{\textcolor[RGB]{105,105,105}{\textit{DBS}}} \\
    Checkpoint model                   & 5k-step SFT & 7.5k-step SFT & 7.5k-step SFT \\
    Global / Micro batch         & 6400 / 320    & 6400 / 320    & 6400 / 320 \\
    Learning rate                & $8\times10^{-6}$ & $8\times10^{-6}$ & $8\times10^{-6}$ \\   
    Noise level                  & 0.2           & 0.2           & 0.2 \\
    Parallel environments        & 128           & 128           & 128 \\
    Rollout epochs               & 8             & 8             & 8 \\
    Max interact steps           & 400           & 400           & 400 \\
    Clean success threshold      & 100           & 100           & 100 \\
    Clip ratio (high)            & 1.0           & 1.0           & 1.0 \\
    Update epochs                & 5             & 5             & 5 \\
    EMA decay $\epsilon$             & $0.1\!\to\!0.995$ & $0.1\!\to\!0.995$ & $0.1\!\to\!0.995$ \\
    SFT regularization $\lambda$ & 0.5           & 1.0           & 0.5 \\
    Num of real data             & 50            & 50            & 50 \\
    Total training steps         & 60            & 80            & 60 \\
    \bottomrule
  \end{tabular}%
  }
\end{table}

\vspace{-5pt}

Tab.~\ref{tab:hyperparameters} lists the hyperparameters for both training stages across all three tasks.
One implementation detail concerns control frequency alignment between the two stages.
Real-world demonstrations are recorded at 20\,Hz; to improve the signal-to-noise ratio of learned actions, we apply a temporal stride of~2 during stage~1 training, so the policy effectively learns action transitions at 10\,Hz.
The stage~2 digital-twin simulation is configured at the same 10\,Hz control frequency, ensuring that the flow velocity field acquired from real data operates without frequency mismatch during online exploration.
The DBS stage uses a global / micro batch of 6400 / 320 with 5 update epochs per rollout, yielding exactly 40 optimizer steps per training step.

\section{Additional Experiments}
\label{AE}

\subsection{Qualitative Analysis on Real-Robot Rollouts}

\begin{figure*}[t]
  \centering
    \includegraphics[width=\linewidth, trim=8 0 0 0, clip]{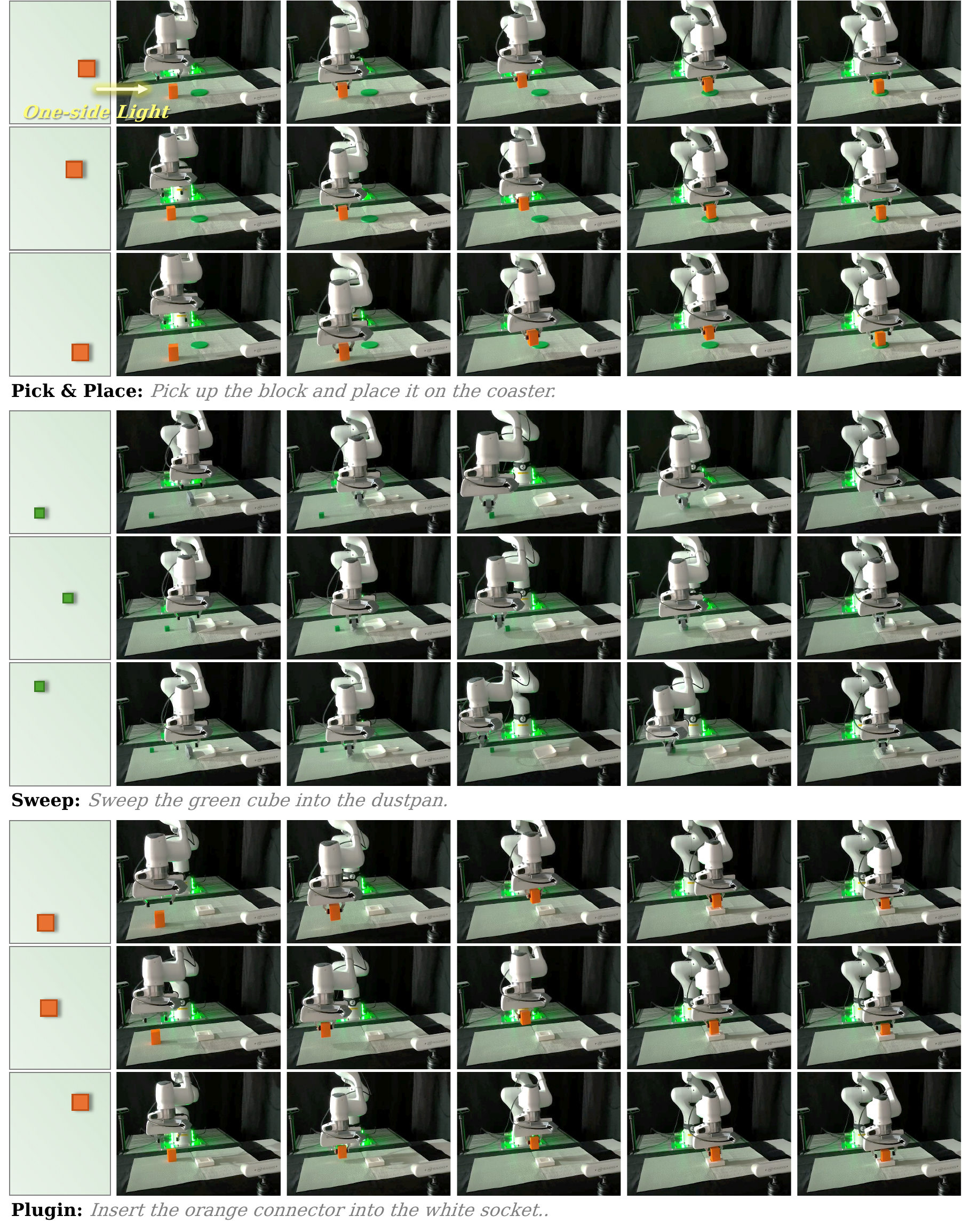}
    \caption{\textbf{Visualization of Real-robot Rollouts.}
    Each row shows one task with three representative trials.
    The leftmost column of each trial depicts the initial object position; subsequent columns show the unfolding rollout, capturing key interaction stages through to task completion.
    }
  \label{fig:tasks_real}
  \vspace{-10pt}
\end{figure*}

Fig.~\ref{fig:tasks_real} visualizes representative real-robot rollouts under the \textit{Unseen.}\ protocol: altered tablecloth and background, 560\,Lux single-side illumination (compare to 980\,Lux in real data), and fully randomized initial object positions. Three behavioral patterns emerge consistently across all three tasks.

\textbf{(\romannumeral1) Maintained phase ordering under visual distribution shift.}
Across Pick\,\&\,Place, Sweep, and Plugin, the policy executes each task phase in the correct order---reach, contact or grasp, transport or push, and final placement---even when the object starts at the extremes of the randomized workspace. Intermediate breakdowns such as approaching without establishing contact, or committing to placement prematurely, are largely absent. This is consistent with SPL's design: by concentrating gradient signal at the active failure locus, DBS shapes each phase boundary in task order rather than applying diffuse reward signal across all denoising steps.

\textbf{(\romannumeral2) Spatial generalization across the full initialization range.}
In Pick\,\&\,Place the end-effector reaches objects across the $30{\times}30$\,cm workspace without the systematic approach bias that characterizes SFT-only policies; in Sweep, the broom--cube contact trajectory adapts to varied cube starting offsets while sustaining push direction toward the goal region; in Plugin, orientation alignment precedes insertion even when the plug begins far from the nominal training pose. Rather than concentrating successes within a narrow high-confidence corridor, the policy distributes successful rollouts across the full initialization space---evidence that DLS has expanded the competence boundary beyond the region covered by the 50 real demonstrations.

\textbf{(\romannumeral3) Preserved task-critical precision under tight completion criteria.}
For all tasks, the final interaction---centimeter-level placement on the coaster, pushing the cube into the $12{\times}10$\,cm target region, and seating the plug within ${\sim}3$\,mm clearance---remains precise under unseen conditions and does not degrade to coarse near-miss behavior. This robustness is largely absent in the SFT prior, where near-miss failures at the final phase are the dominant failure mode under distribution shift, suggesting that SPL-guided boundary shaping specifically suppresses this terminal failure mode.

\subsection{Failure Mode Analysis}

Fig.~\ref{fig:failure_mode} shows representative phase-level failures of the SFT prior under the \textit{Unseen.} protocol; altered lighting and background introduce positional offsets that violate different SPL phase boundaries.

\begin{figure}[htb]
  \centering
  \includegraphics[width=\linewidth]{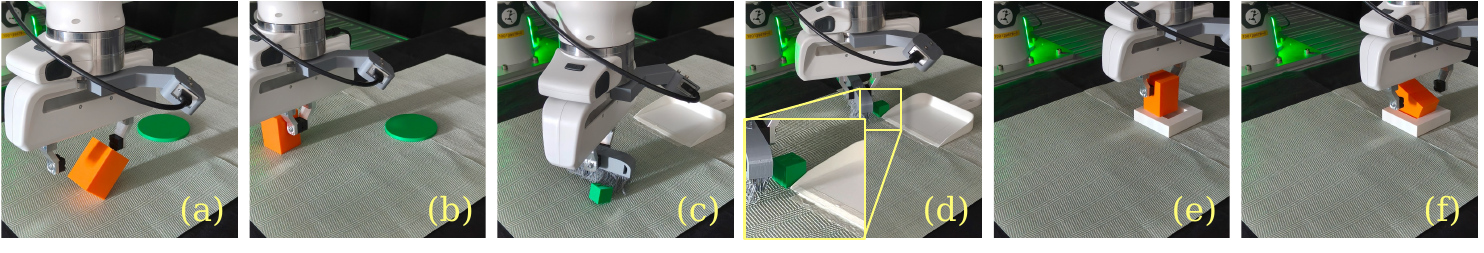}
  \caption{\textbf{SFT Prior Failure Modes under Unseen Conditions.}
  (a)--(b):~\textit{Grasp}-phase failure from lighting-induced approach offset.
  (c):~pre-contact positioning failure before \textit{Contact} phase.
  (d):~terminal push direction misalignment at \textit{Push} phase.
  (e)--(f):~terminal near-miss at \textit{Insert} phase.}
  \label{fig:failure_mode}
\end{figure}

\textbf{\textit{(\romannumeral1) Grasp-phase failure}} [(a)--(b)].
Lighting-induced lateral and depth offsets prevent the end-effector from satisfying the \textit{Grasp} gate predicate; execution stalls at an early phase boundary and never reaches \textit{Transit}.
\textbf{\textit{(\romannumeral2) Pre-terminal positioning failure}} [(c)].
The end-effector fails to reach the required lateral position relative to the cube before the \textit{Contact} phase, blocking broom--cube engagement and halting progress at an intermediate boundary.
\textbf{\textit{(\romannumeral3) Terminal near-miss}} [(d)--(f)].
In (d), broom--cube contact is established but the push direction deviates from the dustpan; in (e)--(f), the plug aligns over the socket but positional offset violates the sub-$3$\,mm \textit{Insert} tolerance. Both enter phase~$Z$ but fail to satisfy its gate---the highest-credit near-miss regime in SPL.

\textbf{Improvements after full DLS training.}
DBS concentrates repulsive gradient at each SPL-localized failure locus, suppressing lighting-induced positional drift across all phase boundaries; the dominant failure mechanism under the \textit{Unseen.} protocol is effectively eliminated.
Residual failures occur at positions where the single RGB camera cannot resolve fore-aft depth, a sensor-level ambiguity outside the failure manifold reachable through simulation exploration.

\subsection{Additional Ablation}

\textbf{Ablation on Behavioral Prior.}
Fig.~\ref{fig_rl_init_ablation} isolates the stage~1 prior by fixing ztage~2 and varying data composition on Plugin ($\pi_{0.5}$): all three non-full priors underperform Sim-Real SFT ($70.0\%$ Unseen. SR), each for a distinct reason.
\textit{Without SFT} collapses to near-zero, as undirected rollouts yield no contrast for DBS; \textit{Real-only SFT} ($46.7\%$) confines exploration to the 50-demo region, leaving failure boundaries beyond it undiscovered; \textit{Sim-only SFT} ($60.0\%$) gains broad coverage but loses real-world grounding, misaligning simulated failure loci with those at deployment.
Simulation coverage and real-world grounding are thus orthogonal prerequisites: neither substitutes for the other.

\begin{figure}[htp]
    \centering
    \begin{minipage}[t]{0.47\linewidth}
        \centering
        \includegraphics[width=\linewidth]{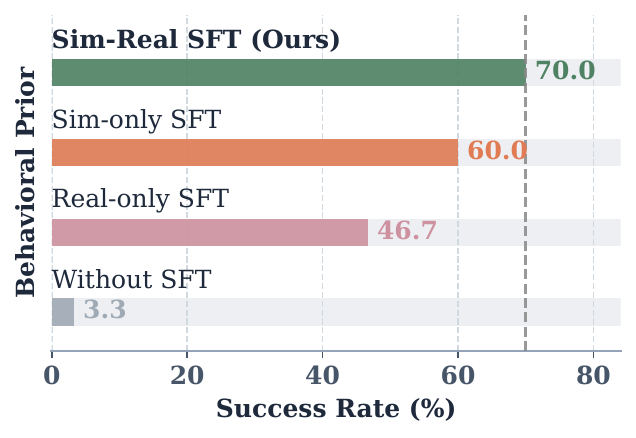}
        \caption{\textbf{Behavioral Prior Ablation on Plugin ($\pi_{0.5}$).} All non-full priors strictly underperform Sim-Real SFT (dashed); failure traces to missing coverage, missing grounding, or both.}
        \label{fig_rl_init_ablation}
    \end{minipage}\hfill
    \begin{minipage}[t]{0.50\linewidth}
        \centering
        \includegraphics[width=\linewidth]{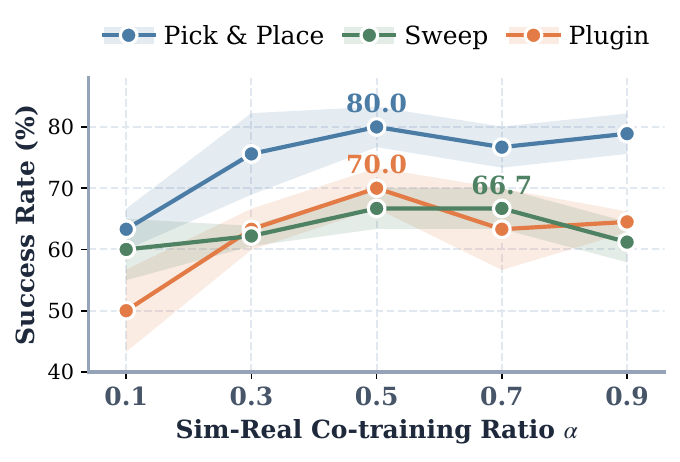}
        \caption{\textbf{Sensitivity to Sim-Real Ratio $\alpha$ on $\pi_{0.5}$.} Performance peaks near $\alpha{=}0.5$; real-dominated degrades more sharply than sim-dominated. Shaded: $\pm$std over 3 seeds.}
        \label{fig_sim_real_ratio}
    \end{minipage}
\end{figure}

\textbf{Sensitivity to Sim-Real Co-training Ratio $\alpha$.}
Fig.~\ref{fig_sim_real_ratio} sweeps $\alpha \in \{0.1, 0.3, 0.5, 0.7, 0.9\}$ on $\pi_{0.5}$ with 50 real and 1,000 simulated demonstrations fixed; Performance peaks at the median across all tasks.
The two tails degrade asymmetrically: real-dominated ($\alpha \to 0.1$) narrows stage~2 exploration coverage, suppressing DBS gradient contrast and lowering the success-rate ceiling; sim-dominated ($\alpha \to 0.9$) retains broad exploration diversity and degrades only moderately, as stage~2's SFT regularization $\lambda\,\mathcal{L}_{\mathrm{SFT}}(\mathcal{D}_{\mathrm{real}})$ re-anchors the policy to the deployment domain.
This asymmetry confirms that exploration coverage is the harder constraint to satisfy post-hoc, while insufficient real grounding is partially recoverable through regularization.

\section{Theoretical Analysis of Failure-Boundary Learning}
\label{sec:phase_gating}

This section establishes the theoretical foundation of the DLS framework:
\textbf{(\romannumeral1) Failure Boundary Localization (Theorem~\ref{thm:recovery}).}
The SPL credit falls in a unique interval $[W_{z^\star-1}, W_{z^\star})$ that identifies the oracle failure phase $z^\star$ without per-step annotations---converting a scalar outcome into an unambiguous phase-level diagnostic.
\textbf{(\romannumeral2) Oracle-Aligned Denoising Perturbation Shaping (Theorem~\ref{thm:alignment}).}
The DBS gradient is proportional to $y \cdot \Sigma_t^{-1} d^\star$, where $d^\star = a_t^{\mathrm{next}} - \mu_t^{\mathrm{old}}$ is the rollout transition residual---the stochastic denoising perturbation realized during rollout relative to the old policy mean. Negative SPL labels suppress perturbations that drove the trajectory past a failure boundary; positive labels reinforce perturbations associated with successful execution.

\subsection{Failure Boundary Localization}

\textbf{Setup.}
We model a manipulation task as an ordered phase transition system with $Z$ sequential phases. Each phase $z \in \{1,\ldots,Z\}$ is associated with a gate predicate $G_z : \mathcal{S} \to \{0,1\}$ over privileged simulator states. The phase abstraction $\psi : \mathcal{S} \to \{1,\ldots,Z\}$ assigns $\psi(s) = z$ to a state where $G_1,\ldots,G_{z-1}$ have been satisfied in order but $G_z$ has not. The SPL potential is:
\begin{equation}
    \Phi(s) = W_{\psi(s)-1} + w_{\psi(s)}\,\phi_{\psi(s)}(s),
    \label{eq:potential_th}
\end{equation}
where $W_z = \sum_{j=1}^{z} w_j$, $W_0 = 0$, $\sum_{z=1}^{Z} w_z = 1$ with $w_z > 0$ for all $z$. The trajectory credit is:
\begin{equation}
    c_{\mathrm{SPL}} = \max_i\, \Phi(s_i).
    \label{eq:credit_th}
\end{equation}

\begin{definition}[Oracle Failure Phase]
\label{def:oracle}
For an unsuccessful trajectory $\tau = (s_0,\ldots,s_K)$ (where $c_{\mathrm{SPL}} < 1$), the oracle failure phase $z^\star$ is defined as the first phase whose gate is never satisfied:
\[
    z^\star = \min\bigl\{ z \in \{1,\ldots,Z\} : G_z(s_i) = 0 \text{ for all } i \bigr\}.
\]
\end{definition}

\begin{assumption}[Gate--Metric Alignment]
\label{asm:gate_th}
For each phase $z$, the intra-phase metric satisfies $\phi_z(s) < 1$ if and only if $G_z(s) = 0$.
In particular, $G_z$ being unsatisfied throughout the trajectory implies $\phi_z(s_i) < 1$ for all $i$ with $\psi(s_i) = z$.
\end{assumption}

This holds by construction for the geometric and force gate predicates used in the SPL specifications: $\phi_z$ is a normalized distance to the gate threshold, reaching one exactly at gate satisfaction.

\begin{lemma}[Credit Interval Property]
\label{lem:interval}
Let $\hat{z}_{\max} = \max_i\,\psi(s_i)$ denote the deepest phase reached by a failed trajectory ($c_{\mathrm{SPL}} < 1$). Under Assumption~\ref{asm:gate_th}:
\begin{equation}
    W_{\hat{z}_{\max}-1} \leq c_{\mathrm{SPL}} < W_{\hat{z}_{\max}}.
    \label{eq:interval}
\end{equation}
\end{lemma}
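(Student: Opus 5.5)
The argument is a direct sandwich bound. The lower bound comes from a single state that attains the deepest phase. The upper bound comes from controlling every term $\Phi(s_i)$ in the maximum. Throughout, let $z^{\dagger} := \hat{z}_{\max}$ and let $i^{\dagger}$ be an index with $\psi(s_{i^{\dagger}}) = z^{\dagger}$; such an index exists because the trajectory is finite.

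\emph{Lower bound.} Since $\phi_{z}(s) \in [0,1]$ and $w_z > 0$, we get $\Phi(s_{i^{\dagger}}) = W_{z^{\dagger}-1} + w_{z^{\dagger}}\phi_{z^{\dagger}}(s_{i^{\dagger}}) \geq W_{z^{\dagger}-1}$. Taking the maximum over $i$ then gives $c_{\mathrm{SPL}} \geq W_{z^{\dagger}-1}$.

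\emph{Upper bound.} I would split the indices $i$ into two cases.

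If $\psi(s_i) = z < z^{\dagger}$, then $\Phi(s_i) \leq W_{z-1} + w_z = W_z \leq W_{z^{\dagger}-1} < W_{z^{\dagger}}$. The last inequality uses $w_{z^{\dagger}} > 0$, and monotonicity of $W$ follows from positivity of the weights.

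If $\psi(s_i) = z^{\dagger}$, I need the strict inequality $\phi_{z^{\dagger}}(s_i) < 1$, and this is where Assumption~\ref{asm:gate_th} enters. By the definition of $\psi$, a state assigned to phase $z^{\dagger}$ is one where $G_1,\ldots,G_{z^{\dagger}-1}$ have been satisfied but $G_{z^{\dagger}}$ has not, so $G_{z^{\dagger}}(s_i) = 0$. The assumption then gives $\phi_{z^{\dagger}}(s_i) < 1$. Hence $\Phi(s_i) < W_{z^{\dagger}-1} + w_{z^{\dagger}} = W_{z^{\dagger}}$.

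The main obstacle is the edge case in which the gate $G_{z^{\dagger}}$ is satisfied at some step but the trajectory never registers phase $z^{\dagger}+1$, for instance because that happens at the final step or because $z^{\dagger} = Z$. I would resolve it with the convention, implicit in Table~\ref{tab:spl_phases}, that satisfying $G_z$ immediately advances $\psi$ to $z+1$, or declares success when $z = Z$. Under this convention:
\begin{itemize}
\item if $z^{\dagger} < Z$, satisfying $G_{z^{\dagger}}$ would place some state in phase $z^{\dagger}+1$, contradicting the maximality of $\hat{z}_{\max}$;
\item if $z^{\dagger} = Z$, satisfying $G_Z$ would mean success, giving $c_{\mathrm{SPL}} = 1$ and contradicting $c_{\mathrm{SPL}} < 1$.
\end{itemize}
The failure hypothesis therefore also guards the top phase. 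In either case every state with $\psi(s_i) = z^{\dagger}$ has $G_{z^{\dagger}}(s_i) = 0$, which is exactly what the second case requires.

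Since the trajectory has finitely many states and each $\Phi(s_i)$ is strictly below $W_{z^{\dagger}}$, their maximum $c_{\mathrm{SPL}}$ is also strictly below $W_{z^{\dagger}}$. Combined with the lower bound, this yields \eqref{eq:interval}.
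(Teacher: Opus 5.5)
Your proof is correct and follows the paper's argument essentially step for step. The lower bound comes from a state attaining the deepest phase, and the upper bound splits into earlier phases (bounded by $W_{z'} \le W_{\hat{z}_{\max}-1}$) and the deepest phase, where Assumption~\ref{asm:gate_th} gives $\phi_{\hat{z}_{\max}} < 1$ because $G_{\hat{z}_{\max}}$ is never satisfied; your explicit treatment of $\hat{z}_{\max} = Z$, where the failure hypothesis $c_{\mathrm{SPL}} < 1$ rather than maximality rules out $G_Z$, slightly tightens the paper's one-line ``otherwise there would be a state in phase $\hat{z}_{\max}+1$'' step.
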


\begin{proof}
\textit{Lower bound.}
Since $\hat{z}_{\max}$ is reached, there exists $s_i$ with $\psi(s_i) = \hat{z}_{\max}$. For that state,
$\Phi(s_i) = W_{\hat{z}_{\max}-1} + w_{\hat{z}_{\max}}\phi_{\hat{z}_{\max}}(s_i) \geq W_{\hat{z}_{\max}-1}$,
hence $c_{\mathrm{SPL}} \geq W_{\hat{z}_{\max}-1}$.

\textit{Upper bound.}
Since the trajectory fails ($c_{\mathrm{SPL}} < 1$), gate $G_{\hat{z}_{\max}}$ is never satisfied: otherwise there would exist a state in phase $\hat{z}_{\max}+1$, contradicting the maximality of $\hat{z}_{\max}$. By Assumption~\ref{asm:gate_th}, $\phi_{\hat{z}_{\max}}(s_i) < 1$ for every $s_i$ with $\psi(s_i) = \hat{z}_{\max}$, so
$\Phi(s_i) < W_{\hat{z}_{\max}-1} + w_{\hat{z}_{\max}} = W_{\hat{z}_{\max}}$.
For states in any phase $z' < \hat{z}_{\max}$, monotonicity of $\{W_j\}$ gives $\Phi(s_i) \leq W_{z'} \leq W_{\hat{z}_{\max}-1} < W_{\hat{z}_{\max}}$.
Since no state lies in a phase above $\hat{z}_{\max}$, we have $c_{\mathrm{SPL}} < W_{\hat{z}_{\max}}$.
\end{proof}

\begin{theorem}[Failure Boundary Localization]
\label{thm:recovery}
Under Assumption~\ref{asm:gate_th}, for any failed trajectory, $\hat{z}_{\max} = z^\star$. Furthermore, $c_{\mathrm{SPL}}$ uniquely determines $z^\star$.
\end{theorem}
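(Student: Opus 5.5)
The proof has two parts: first identify the deepest reached phase $\hat{z}_{\max}$ with the oracle phase $z^\star$, then invoke Lemma~\ref{lem:interval} to show the credit pins the phase down.

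\textbf{Step 1: $\hat{z}_{\max} = z^\star$.} We show both inequalities, using only the ordered semantics of $\psi$. A state has $\psi(s_i)=z$ exactly when $G_1,\ldots,G_{z-1}$ have been satisfied in order and $G_z$ has not.

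For $\hat{z}_{\max}\le z^\star$: suppose some state $s_i$ had $\psi(s_i)=z>z^\star$. Then gate $G_{z^\star}$ was satisfied at some earlier state. This contradicts Definition~\ref{def:oracle}, which says $G_{z^\star}(s_i)=0$ for all $i$.

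For $\hat{z}_{\max}\ge z^\star$: by minimality of $z^\star$, every gate $G_1,\ldots,G_{z^\star-1}$ is satisfied at some state of the trajectory. We need these satisfactions to occur in order, so that the trajectory actually enters phase $z^\star$. This is the step needing care. I would argue by induction on $z<z^\star$ that phase $z+1$ is reached. The base case holds because every trajectory starts in phase $1$, since $\psi$ takes values in $\{1,\dots,Z\}$. For the inductive step, phase $z$ is reached and gate $G_z$ fires at some state, which advances the trajectory to phase $z+1$.

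The inductive step implicitly requires $G_z$ to fire at or after entry into phase $z$. I would make this explicit by reading the gates as evaluated in sequence, in the phase-transition-system sense that Table~\ref{tab:spl_phases} uses ($G_z$ is the condition \emph{at phase $z$} triggering $z\to z+1$). Under this reading, ``$G_z$ never satisfied'' in Definition~\ref{def:oracle} means never satisfied while in phase $z$. With that convention the induction closes, and $\hat{z}_{\max}=z^\star$.

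An equivalent route uses the maximality argument already given in the proof of Lemma~\ref{lem:interval}. There, $G_{\hat{z}_{\max}}$ is never satisfied, so $z^\star\le\hat{z}_{\max}$. Combined with the first inequality, this gives equality directly. I would present this version, since it reuses the lemma's own reasoning.

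\textbf{Step 2: uniqueness.} Lemma~\ref{lem:interval} together with Step 1 gives $c_{\mathrm{SPL}}\in[W_{z^\star-1},W_{z^\star})$. Because every $w_z>0$, the intervals $[W_{z-1},W_z)$ for $z=1,\ldots,Z$ are nonempty and pairwise disjoint, and they partition $[0,1)$. Hence exactly one index $z$ satisfies $W_{z-1}\le c_{\mathrm{SPL}}<W_z$, namely $z^\star$. Explicitly, $z^\star=\min\{z: W_z>c_{\mathrm{SPL}}\}$, a function of $c_{\mathrm{SPL}}$ alone.

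The main obstacle is the ordering subtlety in Step 1: Definition~\ref{def:oracle} quantifies over all states, while $\psi$ tracks in-order satisfaction. I would resolve it by stating the sequential interpretation of the gates explicitly.
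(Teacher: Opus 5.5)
Your proposal is correct and follows essentially the same route as the paper. The paper also obtains $\hat z_{\max}\le z^\star$ from the sequential gate structure (every $G_z$ with $z<\hat z_{\max}$ must have fired), obtains $z^\star\le\hat z_{\max}$ from the maximality argument in the proof of Lemma~\ref{lem:interval}, and concludes uniqueness from the disjoint intervals $[W_{z-1},W_z)$. The one difference is that you state the in-order reading of Definition~\ref{def:oracle} explicitly, whereas the paper uses it only tacitly: its maximality step rules out only in-order satisfaction of $G_{\hat z_{\max}}$, so it needs that reading just as much as your chosen route does.
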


\begin{proof}
\textit{Step 1: $\hat{z}_{\max} = z^\star$.}
We show that $\hat{z}_{\max}$ is exactly the first phase whose gate is never satisfied.

For all phases $z < \hat{z}_{\max}$: by the sequential gate structure, $\psi(s_i) = \hat{z}_{\max} > z$ requires gate $G_{z}$ to have been satisfied at some earlier step; hence $G_z$ is satisfied and $z \notin \{z : G_z \text{ unsatisfied}\}$.

For phase $\hat{z}_{\max}$: as argued in the upper bound of Lemma~\ref{lem:interval}, $G_{\hat{z}_{\max}}$ is never satisfied on this trajectory. Therefore $z^\star = \hat{z}_{\max}$.

\textit{Step 2: $c_{\mathrm{SPL}}$ uniquely identifies $\hat{z}_{\max}$.}
Since all $w_z > 0$, the cumulative sequence $0 = W_0 < W_1 < \cdots < W_Z$ is strictly increasing, so the intervals $[W_{z-1}, W_z)$ for $z = 1,\ldots,Z$ are mutually disjoint and cover $[0,1)$. By Lemma~\ref{lem:interval}, $c_{\mathrm{SPL}} \in [W_{\hat{z}_{\max}-1}, W_{\hat{z}_{\max}})$, which identifies $\hat{z}_{\max}$ uniquely.
\end{proof}

Theorem~\ref{thm:recovery} establishes that SPL recovers an oracle signal---the true phase at which execution breaks down---from a scalar trajectory credit alone, without requiring per-step outcome labels.

\subsection{Oracle-Aligned Denoising Perturbation Shaping}

Having recovered the oracle failure phase, we analyze how DBS translates this into directionally aligned policy updates.

\textbf{Setup.}
For each denoising step, DBS stores a tuple $(a_t,\; a_t^{\mathrm{next}},\; v_t^{\mathrm{old}},\; t,\; s,\; o,\; \ell,\; r)$. The current policy predicts $v_{\theta,t}$, and mirrored velocity branches are constructed as:
\begin{equation}
    v_{\theta,t}^{\pm} = v_t^{\mathrm{old}} \pm \beta\,\Delta v_t,
    \qquad \Delta v_t = v_{\theta,t} - v_t^{\mathrm{old}},
    \label{eq:branches_th}
\end{equation}
inducing transition means via the affine Flow-SDE map:
\begin{equation}
    \mu_{\theta,t}^{\pm} = a_t + c(t)\,v_{\theta,t}^{\pm},
    \qquad c(t) = \delta\!\left(1 + \frac{\sigma_t^2(1-t)}{2t}\right) > 0.
    \label{eq:affine_th}
\end{equation}
The DBS loss is:
\begin{equation}
    \mathcal{L}_{\mathrm{DBS}}(\theta) = \mathbb{E}_{(\cdot)\sim\mathcal{B}}\!\left[\mathrm{Softplus}\!\left(\tfrac{1}{2}\,y\cdot(E_{\theta,t}^+ - E_{\theta,t}^-)\right)\right],
    \label{eq:dbs_th}
\end{equation}
where $E_{\theta,t}^{\pm} = \|a_t^{\mathrm{next}} - \mu_{\theta,t}^{\pm}\|_{\Sigma_t^{-1}}^2$ and $y = 2r - 1 \in [-1,1]$ is the signed SPL label.

\begin{definition}[Rollout Transition Residual]
\label{def:direction}
The rollout transition residual $d^\star$ is defined as:
\begin{equation}
    d^\star = a_t^{\mathrm{next}} - \mu_t^{\mathrm{old}},
    \qquad \mu_t^{\mathrm{old}} = a_t + c(t)\,v_t^{\mathrm{old}},
    \label{eq:direction}
\end{equation}
where $\mu_t^{\mathrm{old}}$ is the transition mean of the old policy and $a_t^{\mathrm{next}} \sim \mathcal{N}(\mu_t^{\mathrm{old}}, \Sigma_t)$ is the actual next diffusion state sampled during rollout. Hence $d^\star$ records the stochastic denoising perturbation realized relative to the old policy mean---not a target or expert action.
\end{definition}

\begin{theorem}[DBS Gradient Alignment]
\label{thm:alignment}
The gradient of $\mathcal{L}_{\mathrm{DBS}}$ with respect to $v_{\theta,t}$ is:
\begin{equation}
    \nabla_{v_{\theta,t}}\mathcal{L}_{\mathrm{DBS}}
    = 2\beta\,c(t)\,\sigma(g)\cdot y\cdot\Sigma_t^{-1}\!\left(\mu_t^{\mathrm{old}} - a_t^{\mathrm{next}}\right),
    \label{eq:grad_th}
\end{equation}
where $g = \frac{1}{2}y(E_{\theta,t}^+ - E_{\theta,t}^-)$ and $\sigma(\cdot)$ is the sigmoid function. Since $2\beta c(t)\sigma(g) > 0$ always, the descent direction satisfies:
\begin{equation}
    -\nabla_{v_{\theta,t}}\mathcal{L}_{\mathrm{DBS}} \propto y\cdot\Sigma_t^{-1}d^\star.
    \label{eq:descent_th}
\end{equation}
The signed label $y$ fully controls the direction: for failure-phase transitions ($y = -1$), the update points along $-\Sigma_t^{-1}d^\star$, suppressing the perturbation that drove the trajectory past the failure boundary; for success-phase transitions ($y > 0$), it points along $+\Sigma_t^{-1}d^\star$, reinforcing the perturbation associated with successful execution.
\end{theorem}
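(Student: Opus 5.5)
The plan is a direct computation. The key observation is that the quadratic terms in $\Delta v_t$ cancel between the two mirrored branches, so the argument of the Softplus is linear in $v_{\theta,t}$. I would work with a single buffer tuple. The expectation over $\mathcal{B}$ is linear, so the gradient of $\mathcal{L}_{\mathrm{DBS}}$ with respect to the prediction $v_{\theta,t}$ for that tuple is the per-sample gradient, up to the normalizing constant of the empirical average; I would absorb that constant or state it explicitly. Throughout I would treat $v_t^{\mathrm{old}}$, $a_t$, $a_t^{\mathrm{next}}$, $y$ and $\Sigma_t$ as constants. They come from the stored rollout and carry no gradient, and this stop-gradient convention is what makes $\partial \Delta v_t/\partial v_{\theta,t} = I$.

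First I would rewrite the branch means relative to the old mean. Using the affine map (\ref{eq:affine_th}) and (\ref{eq:branches_th}), $\mu_{\theta,t}^{\pm} = a_t + c(t)v_t^{\mathrm{old}} \pm \beta c(t)\Delta v_t = \mu_t^{\mathrm{old}} \pm \beta c(t)\Delta v_t$. I would note in passing that the original mean (\ref{eq:affine_mean}) also contains a term $\delta\sigma_t^2 a_t/(2t)$ that does not depend on $v$. That term is common to $\mu^\pm$ and $\mu^{\mathrm{old}}$, so it can be absorbed into the offset and cancels in every difference used below; this is why the reduced form (\ref{eq:affine_th}) suffices. Positivity $c(t)>0$ is immediate for $t\in(0,1]$, $\delta>0$.

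Second, with $d^\star = a_t^{\mathrm{next}} - \mu_t^{\mathrm{old}}$, the residuals are $a_t^{\mathrm{next}} - \mu_{\theta,t}^{\pm} = d^\star \mp \beta c(t)\Delta v_t$. Expanding the Mahalanobis norms with symmetric $\Sigma_t^{-1}$, the terms $\|d^\star\|^2_{\Sigma_t^{-1}}$ and $\beta^2c(t)^2\|\Delta v_t\|^2_{\Sigma_t^{-1}}$ appear identically in both and cancel. This leaves
\[
E_{\theta,t}^+ - E_{\theta,t}^- = -4\beta c(t)\,\Delta v_t^\top \Sigma_t^{-1} d^\star ,
\qquad\text{hence}\qquad
g = -2\beta c(t)\, y\, \Delta v_t^\top\Sigma_t^{-1} d^\star .
\]

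Third, I would apply the chain rule with $\mathrm{Softplus}'=\sigma$. This gives $\nabla_{v_{\theta,t}} = \sigma(g)\cdot(-2\beta c(t) y\,\Sigma_t^{-1}d^\star) = 2\beta c(t)\sigma(g)\,y\,\Sigma_t^{-1}(\mu_t^{\mathrm{old}}-a_t^{\mathrm{next}})$, which is (\ref{eq:grad_th}). Since $\beta>0$, $c(t)>0$ and $\sigma(g)\in(0,1)$, negating yields (\ref{eq:descent_th}). The sign interpretation for $y=-1$ versus $y>0$ then follows by inspection. When $y=0$ the gradient vanishes, which is consistent with the proportionality.

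I do not expect a real obstacle. The only step needing care is the cancellation in the second step, where one must check that the cross terms carry opposite signs across the $\pm$ branches. I would also state the stop-gradient convention on the stored quantities explicitly, since otherwise $\Delta v_t$ would not have identity Jacobian.
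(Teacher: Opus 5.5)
Your proposal is correct and follows the paper's proof essentially step for step. You rewrite $\mu_{\theta,t}^{\pm}=\mu_t^{\mathrm{old}}\pm\beta c(t)\Delta v_t$ and expand both Mahalanobis errors around $d^\star$, so the quadratic terms cancel and $g=-2\beta c(t)\,y\,(d^\star)^\top\Sigma_t^{-1}\Delta v_t$ is linear in $v_{\theta,t}$; you then apply the chain rule with $\mathrm{Softplus}'=\sigma$ and unit Jacobian. You also make explicit two points the paper leaves implicit, both correct: the stop-gradient convention, and the fact that the $v$-independent term $\delta\sigma_t^2 a_t/(2t)$ of Eq.~\eqref{eq:affine_mean} cancels in every difference, so the reduced map $a_t+c(t)v$ suffices as long as $\mu_t^{\mathrm{old}}$ is defined consistently.
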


\begin{proof}
Substituting Eq.~\eqref{eq:branches_th} into Eq.~\eqref{eq:affine_th} gives $\mu_{\theta,t}^{\pm} = \mu_t^{\mathrm{old}} \pm \beta c(t)\Delta v_t$.
Using the rollout transition residual $d^\star = a_t^{\mathrm{next}} - \mu_t^{\mathrm{old}}$ from Definition~\ref{def:direction}, expand the Mahalanobis errors:
\begin{align}
    E_{\theta,t}^{\pm} &= \|d^\star \mp \beta c(t)\Delta v_t\|_{\Sigma_t^{-1}}^2
    = \|d^\star\|_{\Sigma_t^{-1}}^2
    \mp 2\beta c(t)\,(d^\star)^\top\Sigma_t^{-1}\Delta v_t
    + \beta^2 c(t)^2\|\Delta v_t\|_{\Sigma_t^{-1}}^2.
\end{align}
The quadratic terms cancel in the difference:
\begin{equation}
    E_{\theta,t}^+ - E_{\theta,t}^- = -4\beta\,c(t)\,(d^\star)^\top\Sigma_t^{-1}\Delta v_t.
    \label{eq:diff_th}
\end{equation}
Hence $g = -2\beta c(t)\,y\,(d^\star)^\top\Sigma_t^{-1}\Delta v_t$. Since $\Delta v_t$ is linear in $v_{\theta,t}$ with unit Jacobian, the chain rule gives:
\[
    \nabla_{v_{\theta,t}}\mathcal{L}_{\mathrm{DBS}}
    = \sigma(g)\cdot\nabla_{v_{\theta,t}}g
    = \sigma(g)\cdot(-2\beta c(t)\,y\,\Sigma_t^{-1}d^\star)
    = 2\beta c(t)\sigma(g)\cdot y\cdot\Sigma_t^{-1}(\mu_t^{\mathrm{old}} - a_t^{\mathrm{next}}).
\]
Since $d^\star = a_t^{\mathrm{next}} - \mu_t^{\mathrm{old}}$ and $2\beta c(t)\sigma(g) > 0$, this yields Eq.~\eqref{eq:descent_th}.
This completes the proof.
\end{proof}

\begin{theorem}[Failure-Inducing Transition Suppression]
\label{thm:suppression}
Under the Gaussian Flow-SDE transition model, the DBS descent direction for a failure-phase tuple ($y = -1$) is exactly opposite to the gradient of the log-probability of the failure-inducing action $a_t^{\mathrm{next}}$:
\begin{equation}
    -\nabla_{v_{\theta,t}}\mathcal{L}_{\mathrm{DBS}}\big|_{y=-1}
    \;\propto\;
    -\nabla_{v_{\theta,t}} \log p_\theta(a_t^{\mathrm{next}}),
    \label{eq:suppression}
\end{equation}
where $p_\theta(a_t^{\mathrm{next}}) = \mathcal{N}(a_t^{\mathrm{next}};\, \mu_{\theta,t},\, \Sigma_t)$ and $\mu_{\theta,t} = a_t + c(t)\,v_{\theta,t}$.
Consequently, gradient descent on $\mathcal{L}_{\mathrm{DBS}}$ locally decreases $\log p_\theta(a_t^{\mathrm{next}})$, reducing the probability that the updated policy generates the same failure-inducing denoising perturbation.
\end{theorem}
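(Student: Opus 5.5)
The plan is to compute both sides of Eq.~\eqref{eq:suppression} explicitly and compare them. Theorem~\ref{thm:alignment} already supplies the left-hand side. For a general tuple it gives
$\nabla_{v_{\theta,t}}\mathcal{L}_{\mathrm{DBS}} = 2\beta c(t)\sigma(g)\, y\, \Sigma_t^{-1}(\mu_t^{\mathrm{old}} - a_t^{\mathrm{next}})$.
Setting $y=-1$, the descent direction is
\[
-\nabla_{v_{\theta,t}}\mathcal{L}_{\mathrm{DBS}}\big|_{y=-1} = -2\beta c(t)\sigma(g)\,\Sigma_t^{-1} d^\star .
\]
Here $d^\star = a_t^{\mathrm{next}} - \mu_t^{\mathrm{old}}$, and the scalar prefactor $2\beta c(t)\sigma(g)$ is strictly positive.

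Next I will differentiate the Gaussian log-density. Since $\log p_\theta(a_t^{\mathrm{next}}) = -\tfrac12\|a_t^{\mathrm{next}} - \mu_{\theta,t}\|^2_{\Sigma_t^{-1}} + \text{const}$ and $\mu_{\theta,t} = a_t + c(t)v_{\theta,t}$, the chain rule gives
\[
\nabla_{v_{\theta,t}}\log p_\theta(a_t^{\mathrm{next}}) = c(t)\,\Sigma_t^{-1}\bigl(a_t^{\mathrm{next}} - \mu_{\theta,t}\bigr).
\]
The covariance $\Sigma_t$ does not depend on $\theta$, so the normalizer contributes nothing.

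The main obstacle is that this expression involves $\mu_{\theta,t}$, while the DBS gradient involves $\mu_t^{\mathrm{old}}$. I will resolve it by evaluating at the on-policy point $\theta = \theta^{\mathrm{old}}$, which is where the buffer tuples are fresh and the update starts. There $v_{\theta,t} = v_t^{\mathrm{old}}$, so $\Delta v_t = 0$ and $\mu_{\theta,t} = \mu_t^{\mathrm{old}}$. This reduces the log-likelihood gradient to $c(t)\Sigma_t^{-1} d^\star$. It also gives $g = 0$, so $\sigma(g) = \tfrac12$, and the DBS descent direction becomes exactly $-\beta\,\nabla_{v_{\theta,t}}\log p_\theta(a_t^{\mathrm{next}})$.

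Away from that point, $\mu_{\theta,t} - \mu_t^{\mathrm{old}} = c(t)\Delta v_t$, and the identity holds only up to a term of order $c(t)^2\Sigma_t^{-1}\Delta v_t$. I will state the result as a first-order, local statement at the rollout policy, which is consistent with the word ``locally'' in the theorem. I will note this explicitly rather than claim exactness for arbitrary $\theta$.

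Finally, the ``consequently'' clause follows from a first-order Taylor expansion. A step $v_{\theta,t} \leftarrow v_{\theta,t} - \eta\nabla\mathcal{L}_{\mathrm{DBS}}$ changes $\log p_\theta(a_t^{\mathrm{next}})$ by
\[
-\eta\,\beta\,\|\nabla\log p_\theta\|^2 + O(\eta^2),
\]
which is negative whenever $d^\star \neq 0$. That event has probability one under the Gaussian sampler. Propagating this through the network parameters via $\nabla_\theta = J^\top\nabla_v$, where $J$ is the Jacobian of $v_{\theta,t}$ with respect to $\theta$, preserves the sign of the inner product, since $\langle J^\top u, J^\top u\rangle \ge 0$.
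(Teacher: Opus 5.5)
Your proof is correct and follows the same route as the paper. You take the $y=-1$ case of Theorem~\ref{thm:alignment}, differentiate the Gaussian log-density at the rollout point $v_{\theta,t}=v_t^{\mathrm{old}}$ to get $c(t)\Sigma_t^{-1}d^\star$, and compare the two; your added care (that $g=0$ there so $\sigma(g)=\tfrac12$, that proportionality fails away from $\theta^{\mathrm{old}}$ by a $c(t)^2\Sigma_t^{-1}\Delta v_t$ term, and the Taylor/Jacobian argument for the ``consequently'' clause) is sharper than the paper, which keeps a general $\sigma(g)$ while evaluating at the old policy and later calls the relation exact, though under the paper's EMA synchronization $\theta$ and $\theta^{\mathrm{old}}$ generally differ even when an update begins, so this evaluation point is a stipulation rather than where the update actually starts.
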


\begin{proof}
Since $a_t^{\mathrm{next}} \sim \mathcal{N}(\mu_{\theta,t}, \Sigma_t)$ with $\mu_{\theta,t} = a_t + c(t)\,v_{\theta,t}$, the log-probability is:
\[
    \log p_\theta(a_t^{\mathrm{next}}) = -\tfrac{1}{2}\|a_t^{\mathrm{next}} - \mu_{\theta,t}\|_{\Sigma_t^{-1}}^2 + \mathrm{const}.
\]
Taking the gradient with respect to $v_{\theta,t}$ at the old policy ($v_{\theta,t} = v_t^{\mathrm{old}}$, so $\mu_{\theta,t} = \mu_t^{\mathrm{old}}$):
\begin{equation}
    \nabla_{v_{\theta,t}} \log p_\theta(a_t^{\mathrm{next}}) = c(t)\,\Sigma_t^{-1}(a_t^{\mathrm{next}} - \mu_t^{\mathrm{old}}) = c(t)\,\Sigma_t^{-1}d^\star.
    \label{eq:logprob_grad}
\end{equation}
From Theorem~\ref{thm:alignment}, the DBS descent direction for $y = -1$ is:
\[
    -\nabla_{v_{\theta,t}}\mathcal{L}_{\mathrm{DBS}}\big|_{y=-1} = -2\beta c(t)\sigma(g)\cdot\Sigma_t^{-1}d^\star.
\]
Comparing with Eq.~\eqref{eq:logprob_grad}, since $2\beta\sigma(g) > 0$:
\[
    -\nabla_{v_{\theta,t}}\mathcal{L}_{\mathrm{DBS}}\big|_{y=-1} = -2\beta\sigma(g)\cdot\nabla_{v_{\theta,t}}\log p_\theta(a_t^{\mathrm{next}}).
\]
Hence gradient descent on $\mathcal{L}_{\mathrm{DBS}}$ moves $v_{\theta,t}$ in the direction $-\nabla_{v_{\theta,t}}\log p_\theta(a_t^{\mathrm{next}})$, strictly decreasing the log-probability of the failure-inducing transition.
This completes the proof.
\end{proof}

By Theorem~\ref{thm:recovery}, the label $y = -1$ is assigned exclusively to tuples from rollouts where $\hat{z}_{\max} = z^\star$---i.e., the failure boundary was reached but not crossed. Theorem~\ref{thm:suppression} therefore establishes that DBS directly suppresses the denoising perturbations that empirically led to this failure. For success-phase tuples ($y > 0$), the gradient reverses sign, reinforcing the corresponding perturbations. Both quantities live in the same velocity space $\mathbb{R}^{|\mathcal{A}|}$ and the relationship is an exact equality, requiring no geometric approximation.

Combining Theorems~\ref{thm:recovery}, \ref{thm:alignment}, and~\ref{thm:suppression} closes the DLS learning loop:
\[
    \text{trajectory}
    \;\xrightarrow{\;\text{SPL}\;}\; c_{\mathrm{SPL}}
    \;\xrightarrow{\;\text{Thm.~\ref{thm:recovery}}\;}\; z^\star
    \;\xrightarrow{\;\text{Def.~\ref{def:direction}}\;}\; d^\star
    \;\xrightarrow{\;\text{Thm.~\ref{thm:suppression}}\;}\; \downarrow p_\theta(a_t^{\mathrm{next}})
    \;\xrightarrow{\;\text{Thm.~\ref{thm:alignment}}\;}\; \text{DBS update}.
\]
DLS thereby converts sparse trajectory outcomes into phase-specific gradient updates that provably suppress failure-inducing denoising perturbations, without per-step annotations or a learned critic.




\end{document}